\newtheorem{theorem}{Theorem}
\newtheorem{definition}{Definition}
\DeclareMathOperator{\atantwo}{atan2}
\renewcommand{\vec}[1]{\mathbf{#1}}
\DeclarePairedDelimiter\abs{\lvert}{\rvert}
\theoremstyle{definition}
\newtheorem{proposition}{Proposition}
\newcommand{\insertimageC}[5]{ 
\begin{figure}[#5]
\centering
\includegraphics[width=#1\linewidth, clip=true]{figures/#2}
\caption{#3}
\label{#4}
\end{figure}
}
\newcommand{\insertimageStar}[5]{ 
\begin{figure*}[#5]
\centering
\includegraphics[width=#1\linewidth, clip=true]{figures/#2}
\caption{#3}
\label{#4}
\end{figure*}
}
\algnewcommand\algorithmicinput{\textbf{Input:}}
\algnewcommand\INPUT{\item[\algorithmicinput]}
\algnewcommand\algorithmicoutput{\textbf{Output:}}
\algnewcommand\OUTPUT{\item[\algorithmicinput]}
\newcommand{\comment}[1]{}
\newcommand{\rev}[1]{{#1}} 
\newcommand{\etal}{\textit{et al}.}
\newcommand{\argmax}{\operatornamewithlimits{argmax}}
\newcommand{\suchthat}{\;\ifnum\currentgrouptype=16 \middle\fi|\;}
\newcommand\SmallMatrix[1]{{%
  \normalsize\arraycolsep=0.44\arraycolsep\ensuremath{\begin{bmatrix}#1\end{bmatrix}}}}
\newcommand\MyLBrace[2]{%
\text{#2}\left\{\rule{-10pt}{#1}\right.}
\begin{document}

%
\title{Generic Primitive Detection in Point Clouds Using Novel Minimal Quadric Fits}
%
%
%
%

\author{Tolga~Birdal\,,\,
		Benjamin~Busam\,,\,
		Nassir~Navab\,,\,
        Slobodan~Ilic\,,\,
        Peter~Sturm,\,\,\,\,~\IEEEmembership{Members,~IEEE}
\IEEEcompsocitemizethanks{\IEEEcompsocthanksitem T. Birdal, B. Busam, N. Navab and S. Ilic are with the Department of Informatics, Technical University of Munich, Germany\protect\\
E-mail: tolga.birdal@tum.de, navab@cs.tum.de
\IEEEcompsocthanksitem T. Birdal and S. Ilic are with Siemens AG, Munich, Germany.\protect\\
E-mail: tolga.birdal@siemens.com, slobodan.ilic@siemens.com
\IEEEcompsocthanksitem B. Busam is with Framos GmbH, Munich, Germany.\protect\\
E-mail: b.busam@framos.com
\IEEEcompsocthanksitem P. Sturm is with INRIA, Grenoble, France\protect\\
E-mail: peter.sturm@inria.fr}
\thanks{Manuscript received June XX, 2018.}}%

%
%

\markboth{Journal of \LaTeX\ Class Files,~Vol.~XX, No.~XX, XXXX~2018}%
{Shell \MakeLowercase{\textit{et al.}}: Bare Advanced Demo of IEEEtran.cls for IEEE Computer Society Journals}
%



\IEEEtitleabstractindextext{%
\begin{abstract}
We present a novel and effective method for detecting 3D primitives in cluttered, unorganized point clouds, without axillary segmentation or type specification. We consider the quadric surfaces for encapsulating the basic building blocks of our environments - planes, spheres, ellipsoids, cones or cylinders, in a unified fashion. Moreover, quadrics allow us to model higher degree of freedom shapes, such as hyperboloids or paraboloids that could be used in non-rigid settings.

We begin by contributing two novel quadric fits targeting 3D point sets that are endowed with tangent space information. Based upon the idea of aligning the quadric gradients with the surface normals, our first formulation is exact and requires as low as four oriented points. The second fit approximates the first, and reduces the computational effort. We theoretically analyze these fits with rigor, and give algebraic and geometric arguments. Next, by re-parameterizing the solution, we devise a new local Hough voting scheme on the null-space coefficients that is combined with RANSAC, reducing the complexity from $O(N^4)$ to $O(N^3)$ (three points). To the best of our knowledge, this is the first method capable of performing a generic cross-type multi-object primitive detection in difficult scenes without segmentation. Our extensive qualitative and quantitative results show that our method is efficient and flexible, as well as being accurate.

\end{abstract}

\begin{IEEEkeywords}
Quadrics, Surface Fitting, Implicit Surfaces, Point Clouds, 3D Surface Detection, Primitive Fitting, Minimal Problems
\end{IEEEkeywords}}

\maketitle

\IEEEdisplaynontitleabstractindextext

%
\IEEEpeerreviewmaketitle

\ifCLASSOPTIONcompsoc
\IEEEraisesectionheading{\section{Introduction}\label{sec:introduction}}
\else
\section{Introduction}
\label{sec:introduction}
\fi

%
%
%
%

\IEEEPARstart{S}urface fitting and detection enjoys a rich history in computer vision and graphics communities. The problem is found particularly important because of the power of 3D surfaces to explain generic man-made structures omnipresent in every day life. Many of the constructed or manufactured objects and architecture that surrounds us are results of careful computer aided design (CAD). Some of the primary concerns of 3D computer vision, mapping and reconstruction, try to associate the visual cues acquired by various 2D / 3D sensors with those idealized CAD models, that are used in the assembly of our environments. 

One family of approaches tries to find a direct rigid association between those CAD models and 3D scenes~\cite{birdal2015point}, trying to solve the six degree of freedom (DoF) pose estimation problem. While these approaches are quite successful as the only parameters to discover are rotations and translations, they require a huge number of CAD models to generically represent the real scenarios. To overcome this limitation, inspired by the fact that all CAD models are designed using a similar set of tools, a different line of research attempts to find common \text{bases} explaining a broad set of 3D objects, and tries to detect these bases instead of individual models. Such bases that are the common building blocks of our world, and typically termed \textit{geometric primitives}. While the approaches using bases, significantly reduce the database size, usually, the bases undergo higher dimensional transformations compared to, for instance, rigid ones. Examples of the geometric primitives are splines and nurbs surfaces defined by several control points, or \textit{quadrics}, the three dimensional, nine-DoF, quadratic forms.

Thanks to their power to embody the most typical geometric primitives, such as planes, spheres, cylinders, cones or ellipsoids,
quadrics themselves were of huge interest since 80s~\cite{miller1988}. Some exemplary studies involve recovering 3D quadrics from image projections~\cite{Cross1998}, fitting them to 3D point sets~\cite{Taubin1991}, or detecting special cases of the quadratic forms~\cite{Andrews2014}. A majority of those works either put emphasis on fitting to a noisy, but isolated point set~\cite{Taubin1991,Tasdizen2001,Blane2000}, or restrict the types of shapes under consideration (thereby reduce the DoF) to devise detectors robust to clutter and occlusions~\cite{Allaire2007,Andrews2013,georgiev2016real}. 


In this work, our aim is to unite the fitting and detection worlds and present an algorithm that can simultaneously estimate all parameters of a generic nine DoF quadric, which resides in a 3D cluttered environment and is viewed potentially from a single 3D sensor, introducing occlusions and partial visibility. We craft this algorithm in three stages: (1) First, we devise a new quadric fit. Unlike its ancestors, this one uses the extra information about the tangent space to increase the number of constraints instead of regularizing the solution. This fit requires only four oriented points. We show that such construction also has a regularization effect as a by-product. (2) We then thoroughly analyze its rank properties and devise a novel null-space Hough voting mechanism to reduce the four point case to three. Three points stands out to be the minimalist case developed so far. (3) We propose a variant of RANSAC that operates on our \text{local bases}, which are randomly posited. Per each local basis, we show how to make use of the fit and the voting to hypothesize a likely quadric. Finally, we use simple clustering heuristics to group and strengthen the candidate solutions. Our algorithm works purely on 3D point cloud data and does not depend upon any acquisition modality. Moreover, it makes assumptions neither about the type of the quadric that is present in the scene nor how many are visible.  

This journal paper extends our recent CVPR publication~\cite{Birdal2018}, by providing additionally the following:
\begin{enumerate}[noitemsep]
	\item qualitative and quantitative experiments to better grasp the behavior of the proposed fit,
	\item algebraic and geometric theoretical analysis of the quadric fit
	\item improved elaborate descriptions of the method as well as accompanying pseudocode.
\end{enumerate}
\section{Related Work}
\label{sec:relatedWork}
\subsection{Quadrics} Quadrics appear in various domains of vision, graphics and robotics. They are found to be one of the best local surface approximators in estimating differential properties \cite{Petitjean2002}. Thus, point cloud normals and curvatures are oftentimes estimated with local quadrics~\cite{Zhao20163,birdal2015point}. Yan \etal  propose an iterative method for mesh segmentation by fitting local quadratic surfaces~\cite{yan2006quadric}. Yu presented a quadric-region based method for consistent point cloud segmentation~\cite{liu2016robust}. Kukelova uses quadric intersections to solve minimal problems in computer vision~\cite{kukelova2016efficient}. Uto \etal ~\cite{Uto2013} as well as Pas and Platt~\cite{Pas2013} use quadrics to localize grasp poses and in grasp planning. Quadrics have also been a significant center of attention in projective geometry and reconstruction ~\cite{Gay2017ICCV,Cross1998} to estimate algebraic properties of apparent contours. Finally, You and Zhang~\cite{You2017} used them in feature extraction from face data.



\subsection{Primitive detection} Finding primitives in point clouds has kept the vision researchers busy for a lengthy period of time. Works belonging to this category treat the primitive shapes independently~\cite{georgiev2016real}, giving rise to specific fitting algorithms for planes, spheres, cones, cylinders, etc. Planes, as the simplest forms, are the primary targets of the Hough-family~\cite{borrmann20113d}. Yet, detection of more general set of primitives made RANSAC the method of choice as shown by the prosperous Globfit~\cite{li2011globfit}: a relational local to global RANSAC algorithm. Schnabel \etal ~\cite{Schnabel2007} and Tran \etal ~\cite{Tran2015} also focused on reliable estimation using RANSAC. \rev{Monszpart et al.~\cite{monszpart2015rapter} proposed a greedy heuristic improving upon its randomized counterparts in plane detection. Oesau \etal~\cite{oesau2016planar} proposes a tandem scheme for plane detection  (by region-growing) and regularization to correct the imperfections of the hypotheses. The latter two improve upon Globfit by simultaneously extractıng the planes and the primitive relations. The plane detection has recently been lifted to structural scales~\cite{fang2018planar,czerniawski20186d}.} An interesting application of primitives is given by Qui et.al. who extract pipe runs using cylinder fitting \cite{Qiu2014}. The local Hough transform of Drost and Ilic \cite{drost2015local} showed how the detection of primitives can be made more efficient by considering the local voting spaces. Authors give sphere, cylinder and plane specific formulations targeting point clouds. Lopez \etal ~\cite{lopez2016robust} devise a robust ellipsoid fitting based on iterative non-linear optimization. Sveier \etal ~\cite{sveier2017object} suggest a conformal geometric algebra to spot planes, cylinders and spheres. Andrews' approach \cite{Andrews2014} deals with paraboloids and hyperboloids in CAD models. Even though this is slightly more generalized, paraboloids or hyperboloids are not the only geometric shapes described by quadrics.\\
Methods in this category are quite successful in shape detection, yet they handle the primitives separately. This prevents automatic type detection, or generalized modeling of surfaces.

\subsection{Quadric fitting} 
Since the 1990s generic quadric fitting is cast as a constrained optimization problem, where the solution is obtained from a Generalized Eigenvalue decomposition of a scatter matrix. Pioneering work has been done by Gabriel Taubin \cite{Taubin1991} in which a Taylor approximation to the geometric distance is made. 
This work has then been enhanced by 3L~\cite{Blane2000}, fitting a local, explicit ribbon surface composed of three-level-sets of constant Euclidean distance to the object boundary. This fit implicitly used the local surface information. Later, Tasdizen \cite{Tasdizen2001} improved the local surface properties by incorporating the surface normals as regularizers. This allows for a good and stable fit. Recently, Beale \etal ~\cite{beale2016fitting} introduced the use of a Bayesian prior to regularize the fit. All of these methods use at least nine or twelve~\cite{beale2016fitting} points. Moreover, they only use surface normals as regularizers - not as additional constraints and are also unable to deal with outliers in data. There are a few other studies~\cite{kanatani2005further,Allaire2007}, improving these standard methods, but they involve either non-linear optimization \cite{yan2012variational} or share the common drawback of requiring nine independent constraints and no outlier treatment.

\subsection{Quadric detection} Recovering general quadratic forms from cluttered and occluded scenes is a rather unexplored area. A promising direction was to represent quadrics with spline surfaces ~\cite{morwald2013geometric}, but such approaches must tackle the increased number of control points, i.e. 8 for spheres, 12 for general quadrics \cite{qin1997representing,qin1998representing}. Segmentation is one way to overcome such difficulties \cite{makhal2017grasping,morwald2013geometric}. \rev{Besl and Jain suggested a variable order segmentation-based surface fitting. They, too, use an iterative procedure where the primitive order is raised incrementally~\cite{besl1988segmentation}. This is not very different from performing individual primitive detection.} Vaskevicius et.al.~\cite{Vaskevicius2010} developed a noise-model aware quadric fitting and region-growing algorithm for segmented noisy scenes. However, segmentation, due to its nature, decouples the detection problem in two parts and introduces undesired errors especially under occlusions. Other works exploit genetic algorithms \cite{gotardo2004robust} but have the obvious drawback of inefficiency. QDEGSAC~\cite{frahm2006ransac} proposed a six-point hierarchical RANSAC, but the paper misses out an evaluation or method description for a quadric fit. Petitjean~\cite{Petitjean2002} stressed the necessity of outlier aware quadric fitting however only ends up suggesting M-estimators for future research. 

\rev{Finally, the remarkable performance of deep neural networks (DNN) for learning in 2D image domain~\cite{minto2016scene,garcia2017review} have recently been extended to 3D point clouds~\cite{ppfnet,Deng_2018_ECCV}. While 3D-PRNN~\cite{zou20173d} and PCPNet~\cite{Guerrero2018} are tailored for fitting 3D shape primitives and extracting differential surface properties respectively, their application to our problem of detecting nine-DoF primitives in real scenes containing noise, clutter and occlusions is not immediate. To the best of our knowledge, this remains to be open challenge. We would also like to stress that the community lacks comprehensive primitive detection datasets and this gives the learning algorithms a hard time to grasp all the shape variations of quadrics.}


\section{Preliminaries}
\label{sec:description}
\theoremstyle{definition}
\begin{definition}{A quadric in 3D Euclidean space is a hypersurface defined by the zero set of a polynomial of degree two}:
\begin{align}
\label{eq:quadric}
f (x,y,z)
&= Ax^2+ By^2 + Cz^2 + 2Dxy + 2Exz \\
& + 2Fyz + 2Gx + 2Hy + 2Iz + J = 0. \nonumber
\end{align}
\end{definition}
Alternatively, the vector notation $\vec{v}^T\vec{q} = 0$ is used, where:
\begin{align}
&\vec{q} = \begin{bmatrix}
           A & B & C & D & E & F & G & H & I & J
         \end{bmatrix}^T \\
&\vec{v} = \begin{bmatrix}
			x^2 & y^2 & z^2 & 2xy & 2xz & 2yz & 2x & 2y & 2z & 1
         \end{bmatrix}^T \nonumber
\end{align}
Using homogeneous coordinates, quadrics can be analyzed uniformly.
The point $\vec{x} = (x,y,z) \in \mathbb{R}^3$ lies on the quadric, if the projective algebraic equation over $\mathbb{RP}^3$ with $d_q(\vec{x}) := [\vec{x}^T  1]\vec{Q} [\vec{x}^T 1]^T=0$ holds true, where the matrix $\vec{Q} \in \mathbb{R}^{4 \times 4}$ is defined by re-arranging the coefficients:
\begin{equation}
\vec{Q} =
\SmallMatrix{
A & D & E & G\\
D & B & F & H\\
E & F & C & I\\
G & H & I & J
}
,\quad
\nabla \vec{Q} =  2
\SmallMatrix{
A & D & E & G \\
D & B & F & H \\
E & F & C & I
}.
\end{equation}
$d_q(\vec{x})$ can be viewed as an algebraic distance function. Similar to the quadric equation, the gradient at a given point can be written as $\nabla\vec{Q}(\vec{x}) := \nabla\vec{Q} [\vec{x}^T 1]^T$.
Quadrics are general implicit surfaces capable of representing cylinders, ellipsoids, cones, planes, hyperboloids, paraboloids and potentially the shapes interpolating any two of those.
All together there are 17 sub-types~\cite{Weisstein2017}.
Once $\vec{Q}$ is given, this type can be determined from an eigenvalue analysis of $\vec{Q}$ and its subspaces\rev{~\cite{Eberly2008,le2009classification}}.
Note that quadrics have constant second order derivatives and are practically smooth.
\theoremstyle{definition}
\begin{definition}{
A quadric whose matrix is of rank 2 consists of \rev{the union of two planes}: $
\vec{Q} = \mathbf{\Pi}_1 \mathbf{\Pi}_2^T + \mathbf{\Pi}_2 \mathbf{\Pi}_1^T$, 
where $\mathbf{\Pi}_1$ and $\mathbf{\Pi}_2$ are the homogeneous 4-vectors representing the two planes. A quadric whose matrix is of rank one consists of a single plane: $
\vec{Q} = \mathbf{\Pi} \; \mathbf{\Pi}^T \enspace$.}
\end{definition}
\theoremstyle{definition}
\begin{definition}{The polar plane $\mathbf{\Pi}$ of a point \rev{$\vec{p}$ with respect to a quadric} $\vec{Q}$ is $\mathbf{\Pi} = \vec{Q} \vec{p}$.
Reciprocally, $\vec{p}$ is called the pole of plane $\vec{\Pi}$.}
\end{definition}
Note that if $\vec{Q} \vec{p} = \vec{0}$, then the polar plane does not exist for $\vec{p}$;
also note that for a point that lies on the quadric, the polar plane is the tangent plane in that point.
\theoremstyle{definition}
\begin{definition}{A quadric is called central if it possesses a finite center point $\vec{c}$ that is the pole of the plane at infinity: $\vec{Q} \vec{c} \sim \begin{bmatrix} 0 & 0 & 0 & 1 \end{bmatrix}^T$ e.g. ellipsoids, hyperboloids.}
\end{definition}

\theoremstyle{definition}
\begin{definition}{A dual quadric $\mathbf{Q}^*\sim \mathbf{Q}^{-1}$ is the locus of all planes $\{\mathbf{\Pi}_i\}$ satisfying $\mathbf{\Pi}^T_i\mathbf{Q}^{-1}\mathbf{\Pi}_i=0$}.
\end{definition}
\noindent Quadric dual space is formed by the Legendre transformation, mapping points to tangent planes as covectors. \rev{Given a hypersurface, the tangent space at each point gives a family of hyperplanes, and thus defines a dual hypersurface in the dual space.} Every dual point represents a plane in the primal. Many operations such as fitting can be performed in either of the spaces~\cite{Cross1998}; or in the primal space using constraints of the dual. The latter forms a \textit{mixed} approach, \rev{involving tangency constraints}. Note that, knowing a point lies on the surface gives one constraint, and if, in addition, one knows the tangent plane at that point, then one gets two more constraints. \rev{In this paper, we will use the extra dual constraints to increase the rank of a linear system that solves for the quadric coefficients. This will in return allow us to perform fits with reduced number of points and thereby to lower the minimum number of required points. In Fig.~\ref{tab:dualspace}, we provide combinations of primal and dual constraints each of which leads to a minimal case. Note that, if we have four points, and associated tangent planes, a fit can be formulated.}

\section{Quadric Fitting to 3D Data}
\label{sec:fit}
\subsection{A new perspective to quadric fitting}
\label{sec:fit_full}
State of the art direct solvers for quadric fitting rely either solely on point sets~\cite{Taubin1991}, or use surface normals as regularizers~\cite{Tasdizen2001}.
Both approaches require at least nine points, posing a strict requirement for practical considerations, i.e. using nine points bounds the possibility for RANSAC-like fitting algorithms as the space of potential samples is $N_x^9$ where $N_x$ is the number of points. Here, we observe that typical real life point clouds make it easy to compute the surface normals (tangent space) and thus provide an additional cue. With this orientation information, we will now explain a closed form fitting requiring only four oriented points. 
\begin{figure}[t!]	
\subfigure[]{
  \setlength{\tabcolsep}{5pt}
    \begin{tabular}[b]{lcc}
          & \# Pri. & \# Dual\\
    \toprule
    PD-0 & 9 & 0 \\
    PD-1 & 7 & 1 \\
    PD-2 & 5 & 2 \\
    PD-3 & 4 & 3 \\
     \toprule
    \end{tabular}%
  \label{tab:dualspace}%
}\hfill
\subfigure[]{
\setlength{\tabcolsep}{5pt}
\begin{tabular}[b]{lccc}
          & \# Pri. & \# Dual & VS\\
    \toprule
    Plane & 1 & 1 & 0\\
    2-Planes & 2 & 2 & 0 \\
    Sphere & 2 & 2 & 1\\
    Spheroid & 2 & 2 & 3 \\
     \toprule
    \end{tabular}%
  \label{tab:votingspace}%
}
\caption{\textbf{(a)} Number of constraints for a minimal fit in Primal(P) or Dual(D) spaces. PD-i refers to $\text{i}^{\text{th}}$ combination. \textbf{(b)} Number of minimal constraints and voting space size for various quadrics.} 
\end{figure}
Similar to gradient-one fitting \cite{Tasdizen1999,Tasdizen2001}, our idea is to align the gradient vector of the quadric $\nabla \vec{Q}(\vec{x}_i)$ with the normal of the point cloud $\vec{n}_i \in \mathbb{R}^3$. However, unlike $\nabla 1$~\cite{Tasdizen1999}, we opt to use a linear constraint to increase the rank rather than regularizing the solution. This is seemingly non-trivial as the vector-vector alignment brings a non-linear constraint either of the form:
\begin{align}
\frac{\nabla \vec{Q}(\vec{x}_i)}{\|\nabla \vec{Q}(\vec{x}_i)\|} - \vec{n}_i\, = \,0\,\quad\text{or}\,\quad
\frac{\nabla \vec{Q}(\vec{x}_i)}{\|\nabla \vec{Q}(\vec{x}_i)\|} \,\cdot\, \vec{n}_i\, = \,1.
\end{align}
The non-linearity is caused by the normalization as it is hard to know the magnitude and thus the homogeneous scale in advance. We solve this issue by introducing a per normal homogeneous scale $\alpha_i$ among the unknowns and write: 
\begin{align}
\nabla \vec{Q}( \vec{x}_i) = \nabla \vec{v}^T_i \vec{q} = \alpha_i \vec{n}_i
\end{align}
Stacking this up for all $N$ points $\vec{x}_i$ and normals $\vec{n}_i$ leads to:
\begin{equation}
\label{eq:system}
\MyLBrace{14ex}{\textbf{$\vec{A}^\prime$}}\\
\begin{array}{l}
\MyLBrace{7ex}{\textbf{M}} \\
\MyLBrace{7ex}{\textbf{N}}
\end{array}
\begin{bmatrix}
  \vec{v}^T_1 & 0 & 0 & \cdots & 0 \\
  \vec{v}^T_2 & 0 & 0 & \cdots & 0 \\
  \vdots & \vdots & \vdots & \ddots & \vdots \\
  \vec{v}^T_n & 0 & 0 & \cdots & 0 \\
  {\nabla \vec{v}_1^T} & -\vec{n}_1 & \vec{0}_3 & \cdots & \vec{0}_3 \\
  {\nabla \vec{v}_2^T} & \vec{0}_3 & -\vec{n}_2 & \cdots & \vec{0}_3 \\
  \vdots & \vdots & \vdots & \ddots & \vdots \\
  {\nabla \vec{v}_n^T} & \vec{0}_3 & \vec{0}_3 & \cdots & -\vec{n}_n
\end{bmatrix}
\begin{bmatrix} A \\ B \\ \vdots \\ I \\ J \\ \alpha_1 \\ \alpha_2 \\ \vdots \\ \alpha_n \end{bmatrix}
= \vec{0}
\end{equation}
\setcounter{algorithm}{-1}
\begin{algorithm}[t!]
	\small
	\caption{Quadric \textit{full} fitting.}
	\begin{algorithmic}
		\Require{Unit normalized point set $\{\vec{x},\vec{y},\vec{z}\}$, Corresponding surface normals $\{\vec{n}_x,\vec{n}_y,\vec{n}_z\}$, A weight coefficient $\omega$}
		\Ensure{Quadric $\vec{q}=[A,B,C,2D,2E,2F,2G,2H,2I,J]^T$, Scale factors $\bm{\alpha}$}
		\State $n = \text{numel}(\vec{x})$
		\State $\vec{1} = \text{ones}(n, 1);$
		\State $\vec{0} = \text{zeros}(n, 1);$
		\State $\vec{0}_{nxn} = \text{zeros}(n, n);$
		\State $\vec{X} = [\vec{x}^2, \vec{y}^2, \vec{z}^2, \vec{x}*\vec{y}, \vec{x}*\vec{z}, \vec{y}*\vec{z}, \vec{x}, \vec{y}, \vec{z}, \vec{1}, \vec{0}_{nxn}];$
		\State $\vec{N} = [\text{diag}(\vec{n}_x); \text{diag}(\vec{n}_y); \text{diag}(\vec{n}_z)]$;
		\State $\vec{dX} = [2\vec{x}, \vec{0}, \vec{0}, \vec{y}, \vec{z}, \vec{0}, \vec{1}, \vec{0}, \vec{0}, \vec{0}; ...$
		\State $\quad\quad\quad   \vec{0}, 2\vec{y}, \vec{0}, \vec{x}, \vec{0}, \vec{z}, \vec{0}, \vec{1}, \vec{0}, \vec{0}; ...$
		\State $\quad\quad\quad  \vec{0}, \vec{0}, 2\vec{z}, \vec{0}, \vec{x}, \vec{y}, \vec{0}, \vec{0}, \vec{1}, \vec{0}];$
		\State $\vec{A} = [\vec{X}; \omega \cdot [\vec{dX}, -\vec{N}] ];$
		\State $[\sim,\sim, \vec{V}] = \text{svd}(\vec{A});$
		\State $\vec{q} = \vec{V}(1:10,n+10);$
		\State $\bm{\alpha} = \vec{V}(11:(n+10),n+10);$
		\caption{Quadric fitting, full version.}
		\label{alg:quadricfit}
	\end{algorithmic}
\end{algorithm}

\begin{figure*}[t!]
\centering
\subfigure{
\includegraphics[width=0.4525\textwidth, clip=true]{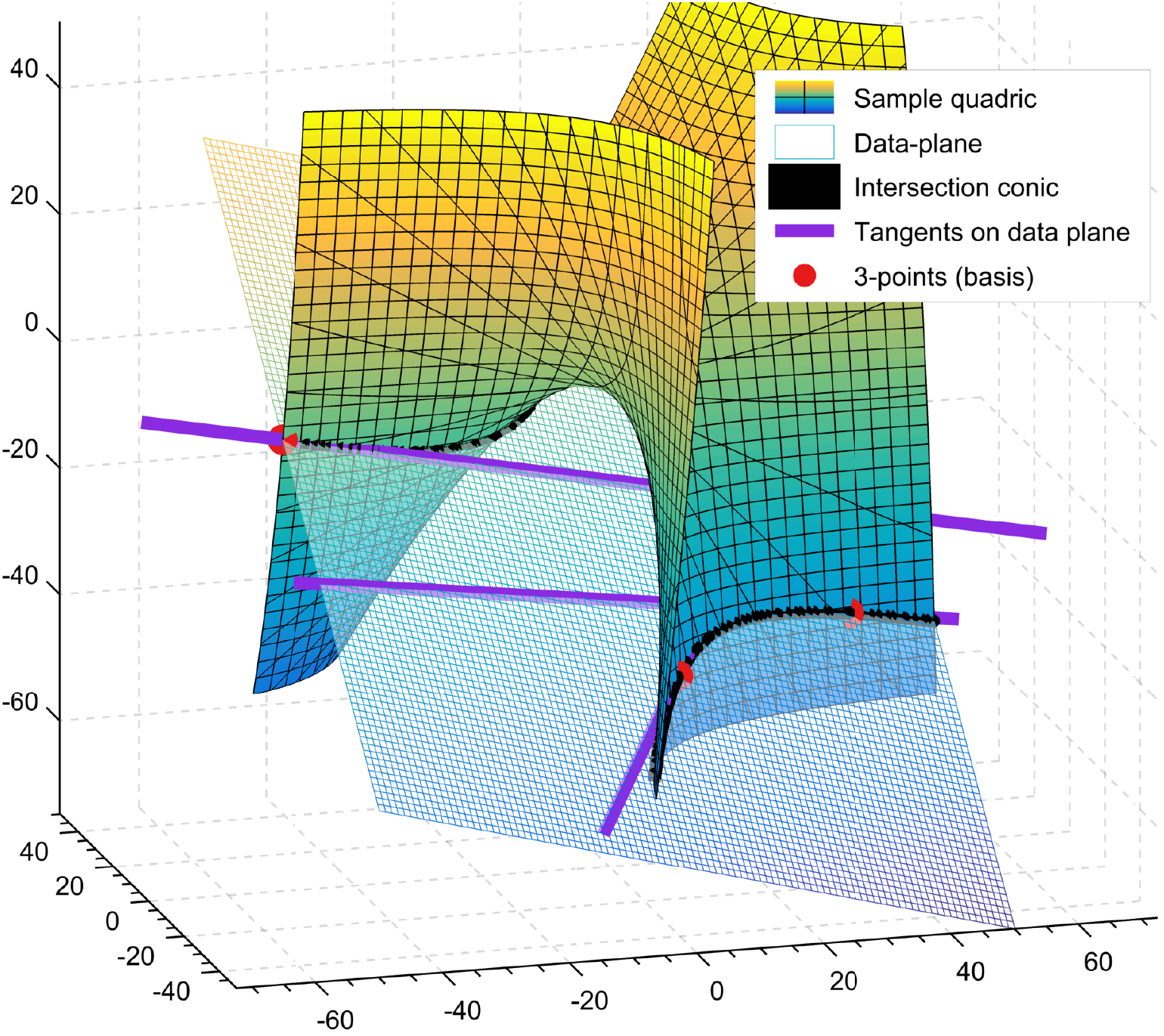}
\label{fig:dataplane}}
\hfill
\subfigure{
\includegraphics[width=0.45\textwidth, clip=true]{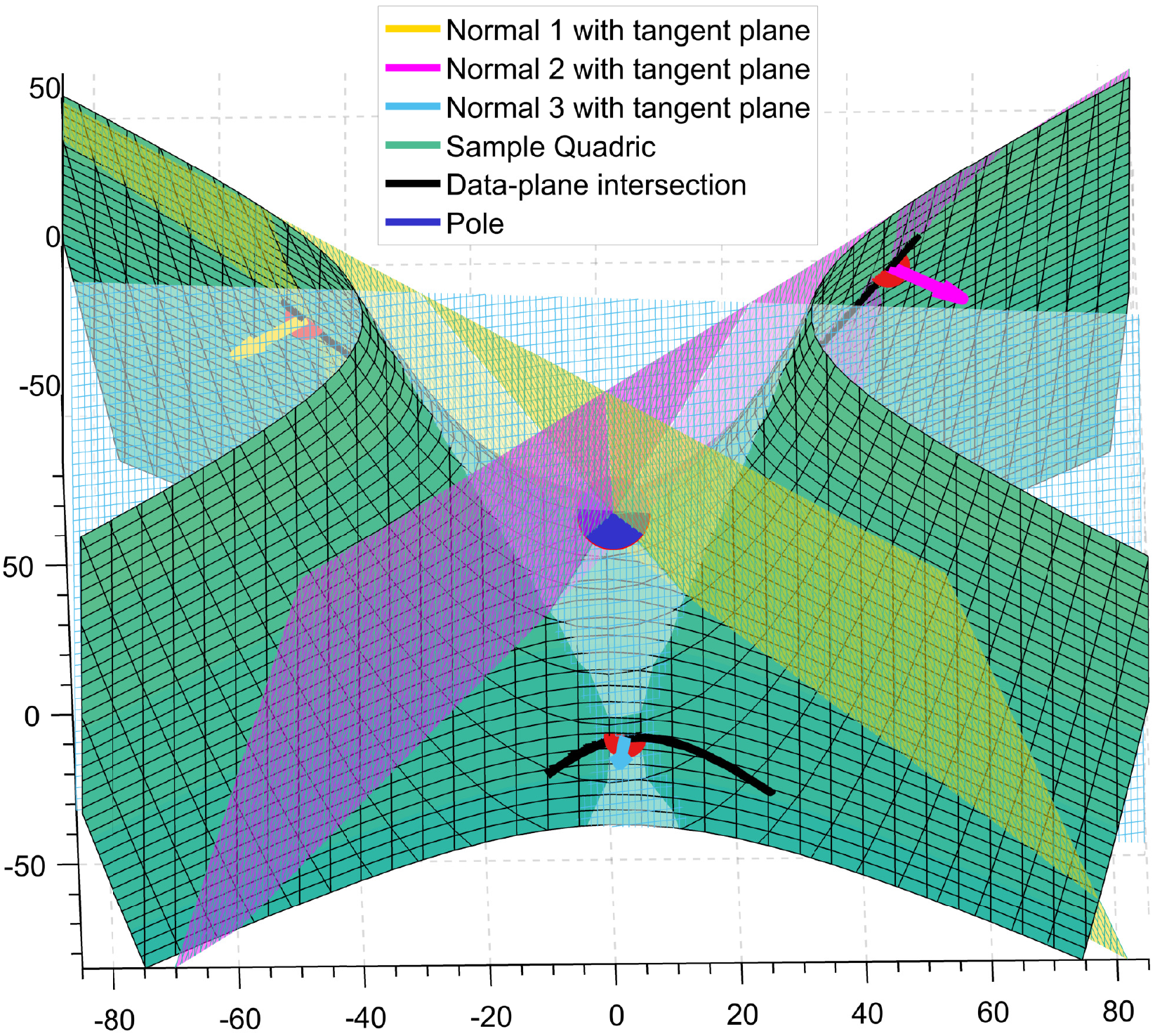}
\label{fig:tangents}}
\centering
\caption{Illustration of the geometric intuition. \textbf{(left)} Visualizations on a sample quadric: the selected basis (three-oriented points); the data-plane; the conic of intersection between data-plane and the quadric; the lines on the data-plane that are tangent to the quadric. \textbf{(right)} Exemplary drawing that shows that the tangent planes to the basis points meet at the pole (see text).}
\label{fig:saddle}
\end{figure*}
where ${\nabla \vec{v}_i^T}= {\nabla \vec{v}(\vec{x}_i)^T}$, $\vec{0}_3$ is a $3\times 1$ column vector of zeros, $\vec{A}^\prime$ is $4N\times(N+10)$ and $\bm{\alpha}=\{\alpha_i\}$ are the unknown homogeneous scales. The solution containing quadric coefficients and individual scale factors lies in the null-space of $\vec{A}^\prime$, and can be obtained accurately via Singular Value Decomposition. Alg.~\ref{alg:quadricfit} provides a MATLAB implementation of such fit. For a non-degenerate quadric, the following rank (rk) relations hold: 
\begin{align}
N = 1 \Rightarrow \text{rk}(\vec{A}) = 4,&\; N = 2 \Rightarrow \text{rk}(\vec{A}) = 7\\
N = 3 \Rightarrow \text{rk}(\vec{A}) = 9,& \text{ and }N > 3 \Rightarrow \text{rk}(\vec{A}) = 10.
\label{eq:rank}
\end{align}
We will now further investigate on this interesting behavior.
\subsection{Existence of a trivial solution for three-point case}
The problem of estimating a quadric from three points and associated normals seems initially
to be well-posed: when counting constraints and degrees of freedom, one obtains nine
on each side (each point gives one constraint, each normal two, whereas a quadric has nine degrees
of freedom). Yet, it turns out that our linear equation system always has a trivial solution besides the true one. This is summarized in Eq.~\ref{eq:rank} by providing the ranks for different cardinalities of bases. We now give further intuition and proof for this behavior:
\begin{theorem}
Three-oriented point quadric fitting, as formulated, possesses a trivial solution besides the true solution,
\rev{namely the plane spanned by the three data points. The fitting problem thus has at least a one-dimensional linear family of solutions, spanned by the true quadric and this trivial solution.}
\end{theorem}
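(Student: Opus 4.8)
The plan is to exhibit the claimed spurious solution explicitly and to verify directly that it lies in the null-space of the system matrix $\vec{A}^\prime$ of Eq.~\ref{eq:system}, regardless of the prescribed normals. Let $\mathbf{\Pi} = [a\; b\; c\; d]^T$ be the homogeneous plane through the three data points $\vec{x}_1,\vec{x}_2,\vec{x}_3$, so that $\mathbf{\Pi}^T[\vec{x}_i^T\; 1]^T = 0$ for $i=1,2,3$. Recall that a rank-one quadric represents a single (doubled) plane, $\vec{Q}_\Pi = \mathbf{\Pi}\mathbf{\Pi}^T$, whose algebraic distance is the squared linear form $d_q(\vec{x}) = \big(\mathbf{\Pi}^T[\vec{x}^T\; 1]^T\big)^2$. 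I would take this doubled plane, together with $\alpha_1=\alpha_2=\alpha_3=0$, as the candidate trivial solution and show that it annihilates both block-rows of $\vec{A}^\prime$.

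For the membership block $\vec{M}$, each $\vec{x}_i$ lies on the plane, so $d_q(\vec{x}_i) = \big(\mathbf{\Pi}^T[\vec{x}_i^T\;1]^T\big)^2 = 0$ and the point-on-quadric constraints $\vec{v}_i^T\vec{q} = 0$ hold. For the gradient block $\vec{N}$, the crux is that the gradient of a squared linear form vanishes on its own zero set: with $g(\vec{x}) := \mathbf{\Pi}^T[\vec{x}^T\;1]^T$ one has $\nabla\vec{Q}(\vec{x}) = \nabla(g^2) = 2\,g(\vec{x})\,[a\; b\; c]^T$, which is the zero vector exactly where $g(\vec{x})=0$, i.e. on the plane itself. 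Hence $\nabla\vec{v}_i^T\vec{q} = \vec{0}$ at every data point, and the alignment constraint $\nabla\vec{v}_i^T\vec{q} = \alpha_i\vec{n}_i$ is met by $\alpha_i = 0$ no matter what the normals $\vec{n}_i$ are. It is precisely this independence from the orientation data that turns the doubled plane into a degenerate extra solution.

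It then remains to assemble the family by linearity. Both the true quadric $(\vec{q}_{\text{true}},\bm{\alpha}_{\text{true}})$ and the doubled plane $(\vec{q}_\Pi,\vec{0})$ lie in the null-space of $\vec{A}^\prime$, and for a genuinely non-planar surface they are linearly independent as vectors in $\mathbb{R}^{N+10}$; since the system is homogeneous, every combination $\lambda(\vec{q}_{\text{true}},\bm{\alpha}_{\text{true}}) + \mu(\vec{q}_\Pi,\vec{0})$ is again a solution, giving the asserted one-dimensional linear family. This also explains the numerical observation $\mathrm{rk}(\vec{A})=9$ for $N=3$ in Eq.~\ref{eq:rank}, a drop of one from the generic value $10$. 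I expect the only delicate points to be checking that the derivative rows encoded in $\nabla\vec{v}_i$ reproduce $2\,g(\vec{x}_i)[a\;b\;c]^T$ and therefore vanish, and confirming that the two null-space vectors are truly independent whenever the underlying quadric is not itself a plane.
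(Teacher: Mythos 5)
Your proposal is correct and follows essentially the same route as the paper: both identify the trivial solution as the rank-one (doubled) data-plane quadric $\mathbf{\Pi}\mathbf{\Pi}^T$ with all scale factors $\alpha_i=0$, verify that the membership and gradient-alignment blocks are annihilated (the paper via $\vec{Q}\vec{U}=\vec{0}$ for points $\vec{U}$ on the plane, you via the equivalent chain-rule computation $\nabla(g^2)=2g\,[a\;b\;c]^T=\vec{0}$ on the plane), and conclude by homogeneity that the true quadric and this trivial solution span at least a one-dimensional linear family. Your added remarks on linear independence and the rank drop to $9$ match the paper's parenthetical caveat that the solution family can be larger when the true quadric is itself a plane.
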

\begin{proof}
In the following, let us call \textbf{data-plane}, the plane spanned by the three data points (coordinates only, i.e. not considering the associated normals). We illustrate this in Fig.~\ref{fig:saddle} (left).
As mentioned in $\S$~\ref{sec:fit_full} above, any rank-1 quadric consists of a single plane $\mathbf{\Pi}$ and can be written as $
\vec{Q} = \mathbf{\Pi} \; \mathbf{\Pi}^T \enspace$.
Hence, for any point $\vec{U}$ on the plane and thus on the quadric, we have $\vec{Q} \vec{U} = \vec{0}$. In our formulation of the fitting problem, this amounts to $
\vec{N} \vec{q} = \vec{0}\enspace$.
$\vec{N}$ refers to all the gradient-normal correspondence equations, stacked together (lower part of Eq.~\ref{eq:system}). We also have $
\vec{M} \vec{q} = \vec{0}.$
due to the point lying on the quadric. This means that the following vector
is a solution of the equation system: coefficients $A$ to $J$ are those of the rank-1 quadric
and the three scalars $\alpha_i$ are zero. In other words, the trivial solution is identified as the rank-1 quadric consisting of the data-plane. Hence, the estimation problem admits at least a one-dimensional linear family of solutions, spanned by the true quadric and the rank-1 quadric of the data-plane. In some cases, the dimension of the family of solutions may be higher (such as when the true
quadric is a plane).
\end{proof}
\subsubsection{A geometric explanation of the fact that the three-oriented-point problem is always under-constrained}
Despite the analytical proof, it is puzzling that nine constraints on nine unknowns are never sufficient in our problem.
Moreover, we may wonder if the existence of a trivial solution is due to our linear problem formulation or if it is generic. 
It turns out that this is generic and can be explained geometrically. To make it easier to imagine, our description will closely follow the Figures~\ref{fig:dataplane} and~\ref{fig:tangents}.
Let us decompose the estimation of the quadric in two parts, the first
part being the determination of the quadric's intersection with the data-plane.
The intersection of any quadric with any plane is in general always a conic (shown as black curve), be it real or imaginary
(the only exception is when the quadric itself contains the data-plane entirely, in which case the
``intersection'' is the entire plane).

Let us examine which constraints we have at our disposal to estimate the intersection conic.
First, the three data points lie on the conic.
Second, we know the tangent planes at the data points, to the true quadric.
Let us intersect the three tangent planes with the data plane -- the resulting three lines (shown in purple) must be tangent to our conic (the only exception occurs when one or more of these tangent
planes are identical to the data plane).

Hence, we know three points on the conic and three tangent lines -- the problem of estimating
the conic is thus in general overdetermined by one DoF.
In other words, six of the nine constraints at our disposal for estimating the quadric
are dedicated to estimating the five degrees of freedom of its intersection with the data plane.
Hence, the remaining three constraints are not sufficient to complete the estimation of the
quadric.

What are these three remaining constraints?
They refer to the orientation of the tangent planes: each of the tangent planes is defined by an
angle expressing the rotation about its intersection line with the data plane.
This angle gives one piece of information on the quadric; for three oriented points we thus have our three remaining constraints.

Note that the three tangent planes to the quadric intersect in the quadric's pole to the
data plane (see Fig.~\ref{fig:tangents}).
Hence, we can determine this pole which, as shown in appendix, lies on the line joining the centers of the possible solutions for the quadric.

Let us also note that the fact that six pieces of information (three data points and three tangent lines
to a conic in the data plane) only constrain five degrees of freedom means that these six pieces
of information are not independent from one another: in the absence of noise or other errors,
they must satisfy a consistency constraint (the fact that they define a conic).
In the presence of noise, the input information will not satisfy this constraint, meaning that
a perfect fit will not exist.
This is different in most so-called minimal estimation problems in geometric computer vision (such as
three-point pose estimation - P3P), where the computed solution is perfectly consistent with the input data.
In our case, we can expect that the computed quadric will not satisfy all constraints exactly, i.e. will not
necessarily be incident with all data points or be exactly tangent to the given tangent planes.

This gives room to different formulations for the problem, depending on how one quantifies the quality of fit. For instance, one possibility would be to impose that the quadric goes exactly through the data points, but that the tangency is only approximately fulfilled by computing the intersection conic in the data plane and minimizing some cost functions over the tangent lines.

\setcounter{algorithm}{0}
\begin{algorithm}[t!]
	\small
	\caption{Approximate quadric fitting in 10 lines of MATLAB code.}
	\begin{algorithmic}
		\Require{Unit normalized point set $\{\vec{x},\vec{y},\vec{z}\}$, Corresponding surface normals $\{\vec{n}_x,\vec{n}_y,\vec{n}_z\}$, A weight coefficient $\omega$}
		\Ensure{Quadric $\vec{q}=[A,B,C,2D,2E,2F,2G,2H,2I,J]^T$}
		\State $\vec{1} = \text{ones}(\text{numel}(\vec{x}), 1);$
		\State $\vec{0} = \text{zeros}(\text{numel}(\vec{x}), 1);$
		\State $\vec{X} = [\vec{x}^2, \vec{y}^2, \vec{z}^2, \vec{x}*\vec{y}, \vec{x}*\vec{z}, \vec{y}*\vec{z}, \vec{x}, \vec{y}, \vec{z}, \vec{1}];$
		\State $\vec{dX} = [2\vec{x}, \vec{0}, \vec{0}, \vec{y}, \vec{z}, \vec{0}, \vec{1}, \vec{0}, \vec{0}, \vec{0}; ...$
		\State $\quad\quad\quad   \vec{0}, 2\vec{y}, \vec{0}, \vec{x}, \vec{0}, \vec{z}, \vec{0}, \vec{1}, \vec{0}, \vec{0}; ...$
		\State $\quad\quad\quad  \vec{0}, \vec{0}, 2\vec{z}, \vec{0}, \vec{x}, \vec{y}, \vec{0}, \vec{0}, \vec{1}, \vec{0}];$
		\State $\vec{A} = [\vec{X}; \omega \cdot \vec{dX}];$
		\State $\vec{N} = [\vec{n}_x; \vec{n}_y; \vec{n}_z];$
		\State $\vec{b} = [\vec{0}; \omega \cdot \vec{N}];$
		\State $\vec{q} = \vec{A} \text{/} \vec{b};$
		\caption{MATLAB 10-liner for approx. quadric fitting.}
		\label{alg:quadricfitApprox}
	\end{algorithmic}
\end{algorithm}

\subsection{Regularizing with gradient norm}
\label{sec:regularfit}
Quadric fitting problem, like many others (e.g. calibration, projective reconstruction) is intrinsically of non-linear nature,
meaning that a “true” \rev{Maximum Likelihood Estimation or Maximum A Posteriori solution, minimizing a geometric distance}, cannot be achieved by a linear fit. However, our main objective in this stage is a sufficiently close and computationally efficient fit, using as few points as possible and upon which we can build our voting scheme. Despite its sparsity, for such purpose, formulation in \S~\ref{sec:fit_full} still remains suboptimal since the unknowns in Eq.~\ref{eq:system} scale linearly with $N$, leaving a large system to solve. In practice, analogous to gradient-one fitting~\cite{Tasdizen1999}, we could prefer unit-norm polynomial gradients, and thus, can write $\alpha_i=1$ or equivalently $\alpha_i \gets \bar{\alpha}$,  one common factor. This \textbf{soft constraint} will try to force zero set of the polynomial respect the local continuity of the data. \rev{Similar direction is also taken by~\cite{guennebaud2008dynamic}, for the case of spheres. However, there, authors follow a two-step fitting process, solving first the gradient and then the positional constraint, whereas we formulate a single system solving for all the shape parameters simultaneously.}
%
Such regularization also saves us from solving the sensitive homogeneous system~\cite{You2017}, and lets us re-write the system in a more compact form $\vec{A}\vec{q}=\vec{n}$:
\begin{equation}\nonumber
\vec{n}=\begin{bmatrix}
0 & 0 & \dots & {n}^1_x & {n}^1_y & {n}^1_z & {n}^2_x & {n}^2_y & {n}^2_z & \dots
\end{bmatrix}^T
\end{equation}
\begin{equation}\nonumber
\vec{q}=\begin{bmatrix}
A & B & C & D & E & F & G & H & I & J
\end{bmatrix}^T 
\end{equation}
\begin{gather}
\vec{A}=
\SmallMatrix{
x_1^2 & y_1^2 & z_1^2 & 2x_1y_1 & 2x_1z_1 & 2y_1z_1 & 2x_1 & 2y_1 & 2z_1 & 1\\
x_2^2 & y_2^2 & z_2^2 & 2x_2y_2 & 2x_2z_2 & 2y_2z_2 & 2x_2 & 2y_2 & 2z_2 & 1\\
&&&& \vdots &&&&&\\
2x_1 & 0 & 0 & 2y_1 & 2z_1 & 0 & 2 & 0 & 0 & 0\\
0 & 2y_1 & 0 & 2x_1 & 0 & 2z_1 & 0 & 2 & 0 & 0\\
0 & 0 & 2z_1 & 0 & 2x_1 & 2y_1 & 0 & 0 & 2 & 0\\
2x_2 & 0 & 0 & 2y_2 & 2z_2 & 0 & 2 & 0 & 0 & 0\\
0 & 2y_2 & 0 & 2x_2 & 0 & 2z_2 & 0 & 2 & 0 & 0\\
0 & 0 & 2z_2 & 0 & 2x_2 & 2y_2 & 0 & 0 & 2 & 0\\
&&&& \vdots &&&&&
}
\label{eq:systemApprox}
\end{gather} 
$\vec{A}$, only $4N \times 10$, is similar to the $\vec{A}^\prime$ in §~\ref{sec:fit_full} and gets full rank for four or more oriented points. In fact, it is not hard to show that the equations in rows are linearly dependent, which is why we get diminishing returns when we add further constraints.
Note that by removing the scale factors from the solution, we also solve the sign ambiguity problem, i.e. the solution to Eq.~\ref{eq:system} can result in negated gradient vectors. 
To balance the contribution of normal induced constraints we introduce a scalar weight $w$, leading to the ten-liner MATLAB implementation as provided in Alg.~\ref{alg:quadricfitApprox}. 



In certain cases, to obtain a type-specific fit, a minor redesign of $\vec{A}$ tailored to the desired primitive suffices (see §.~\ref{sec:exptype}).
If outliers corrupt the point set, a four-point RANSAC could be used. However, below, we present a more efficient way to calculate a solution to Eq.~\ref{eq:systemApprox} rather than using a naive RANSAC on four-tuples by analyzing its solution space. The next section can also be used as a generic method to solve any fitting problem formulated as a linear system, more efficiently.
\section{Quadric Detection in Point Clouds}
\label{sec:detection}
We now factor in clutter and occlusions into our formulation and explain a new pipeline to detect quadrics in 3D data.
\theoremstyle{definition}
\begin{definition}{A basis $\vec{b}$ is a subset composed of a fixed number of scene points ($b$) and hypothesized to lie on the sought surface.}
\end{definition}
Our algorithm operates by iteratively selecting bases from an input scene. Once a basis is fixed, an under-determined quadric fit parameterizes the solution and attached to this basis, a local accumulator space is formed. All other points in the scene are then paired with this basis to vote for the potential primitive. To discover the optimal basis, we perform RANSAC, iteratively hypothesizing different basis candidates and voting locally for probable shapes. Subsequent to such joint RANSAC and voting, we verify resulting hypotheses with efficient two-stage clustering and score functions such that multiple quadrics can be detected without repeated executions of the algorithm. We will now describe, in detail, the voting and the bases selection, respectively.

\subsection{Parameterizing the solution space}
\label{sec:parametrize}
Linear system in Eq.~\ref{eq:system} describes an outlier-free closed form fit. To treat the clutter in the scene, a direct RANSAC on nine-DoF quadric appears to be trivial. Yet, it has two drawbacks:
1) evaluating the error function many times is challenging, as it involves a scene-to-quadric overlap calculation in a geometric meaningful way.
2) even with the proposed fitting, selecting random four-tuples from the scene might be slow in practice. 

An alternative to RANSAC is Hough voting. However, $\vec{q}$ has nine DoFs and is not discretization friendly. The complexity and size of this parameter space makes it hard to construct a voting space.
Instead, we will now devise a local search. For this, let $\vec{q}$ be a solution to the linear system in (\ref{eq:systemApprox}) and $\vec{p}$ be a particular solution. $\vec{q}$ can be expressed by a linear combination of homogeneous solutions $\bm{\mu}_i$ as:
\begin{align}
\label{eq:nullspace}
&\vec{q}= \vec{p} + \sum_i^{d_n} \lambda_i \bm{\mu}_i \\
&= \vec{p} +
\begin{bmatrix}
\bm{\mu}_1 & \bm{\mu}_2 & \cdots
\end{bmatrix}
\begin{bmatrix}
\lambda_1 & \lambda_2 & \cdots
\end{bmatrix}^T = \vec{p} + \vec{N}_A\bm{\lambda}.\nonumber
\end{align}
The dimensionality ${d_n}$ of the null space $\vec{N}_A$  depends on the rank of $\vec{A}$, which is directly influenced by the number of points used: ${d_n}=10-rk(\vec{A})$. The exact solution could always be computed by including more points from the scene and validating them, i.e. by a local search. For that reason, the fitting can be split into distinct parts: first a parametric solution is computed, such as in Eq. \ref{eq:nullspace}, using a subset of points $\vec{b}=\{\vec{x}_1,...,\vec{x}_m\}$ which lie on a quadric. We refer to subset $\vec{b}$ as the \textit{basis}. Next, the coefficients $\bm{\lambda}$, and thus the solution, can be obtained by searching for other point(s) $(\vec{x}_{m+1},...,\vec{x}_{m+k})$ which lie on the same surface as $\vec{b}$. 

\begin{proposition}
\label{thm:fastLambda}
If two point sets $\vec{b}=(\vec{x}_1,...,\vec{x}_m)$ and $\vec{X}=(\vec{x}_{m+1},...,\vec{x}_{m+k})$ lie on the same quadric with parameters $\vec{q}$, then the coefficients $\bm{\lambda}=
\begin{bmatrix}
\lambda_1 & \lambda_2 & \cdots
\end{bmatrix}^T$
of the solution space (\ref{eq:nullspace}) are given by the solution of the system:
\begin{equation}
\label{eq:lemma1}
(\vec{A}_k  \vec{N}_A) \bm{\lambda} = \vec{n}_k - \vec{A}_k \vec{p}
\end{equation}
where $\vec{A}_k$, $\vec{n}_k$ are the linear constraints of the latter set $\vec{b}'$ in form of (\ref{eq:systemApprox}), $\vec{p}$ is a particular solution and $\vec{N}_A$ is a stacked null-space basis as in (\ref{eq:nullspace}), obtained from $\vec{b}$.
\end{proposition}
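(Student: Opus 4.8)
The plan is to prove the proposition by direct substitution, using the fact that the parameterization (\ref{eq:nullspace}) already encodes \emph{every} quadric compatible with the basis $\vec{b}$, while incidence of the extra points $\vec{X}$ on the same quadric supplies exactly the linear equations that pin down $\bm{\lambda}$. First I would note that, since $\vec{b}$ lies on the quadric with coefficients $\vec{q}$, the vector $\vec{q}$ solves the system $\vec{A}\vec{q} = \vec{n}$ assembled from $\vec{b}$ in the form (\ref{eq:systemApprox}). Therefore $\vec{q}$ belongs to the affine solution set of that system, namely $\vec{p} + \operatorname{range}(\vec{N}_A)$, where $\vec{p}$ is a particular solution and the columns of $\vec{N}_A$ span the homogeneous null space. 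Hence there exists a coefficient vector $\bm{\lambda}$ with $\vec{q} = \vec{p} + \vec{N}_A\bm{\lambda}$, and this is precisely the vector we wish to characterize.

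Next I would write the constraints contributed by $\vec{X} = (\vec{x}_{m+1},\ldots,\vec{x}_{m+k})$. Because $\vec{X}$ lies on the \emph{same} quadric, stacking its point-incidence and gradient-alignment rows in the identical layout of (\ref{eq:systemApprox}) yields $\vec{A}_k\vec{q} = \vec{n}_k$. Substituting $\vec{q} = \vec{p} + \vec{N}_A\bm{\lambda}$ gives $\vec{A}_k\vec{p} + \vec{A}_k\vec{N}_A\bm{\lambda} = \vec{n}_k$, and rearranging isolates $\bm{\lambda}$ in the asserted form $(\vec{A}_k\vec{N}_A)\bm{\lambda} = \vec{n}_k - \vec{A}_k\vec{p}$. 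This shows that the true $\bm{\lambda}$ solves the stated reduced system, which is the content of the proposition.

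The one genuinely delicate point --- and the step I would treat most carefully --- is the \textbf{solvability and uniqueness} of this reduced system. Consistency is automatic: by hypothesis both sets lie on a common quadric, so the $\bm{\lambda}$ exhibited above is a solution and the right-hand side necessarily lies in the column space of $\vec{A}_k\vec{N}_A$. Uniqueness, by contrast, requires $\vec{A}_k\vec{N}_A$ to have full column rank equal to $d_n = 10 - \text{rk}(\vec{A})$, i.e.\ the points $\vec{X}$ must furnish at least $d_n$ constraints that remain independent once projected onto $\vec{N}_A$. I would establish this by a rank-counting argument in the spirit of (\ref{eq:rank}), tracking how many independent rows each additional oriented point adds to $\vec{A}_k\vec{N}_A$ before the null-space dimension is exhausted, and I would explicitly flag the degenerate configurations (e.g.\ when $\vec{X}$ reproduces a dependency already present in $\vec{b}$, or when the underlying quadric collapses to a plane) in which the rank drops and $\bm{\lambda}$ is left underdetermined.
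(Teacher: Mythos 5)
Your proof is correct and takes essentially the same route as the paper's: both substitute the affine parameterization $\vec{q} = \vec{p} + \vec{N}_A\bm{\lambda}$ into the stacked constraints $\vec{A}_k\vec{q} = \vec{n}_k$ contributed by the additional points and rearrange to obtain $(\vec{A}_k\vec{N}_A)\bm{\lambda} = \vec{n}_k - \vec{A}_k\vec{p}$. Your closing discussion of consistency and of the full column rank of $\vec{A}_k\vec{N}_A$ goes beyond the paper, whose proof establishes only that the true $\bm{\lambda}$ satisfies the reduced system and defers well-posedness to its separate rank analysis; flagging the degenerate configurations is a reasonable addition but is not required for the proposition as stated.
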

\begin{proof}\let\qed\relax
Let $\vec{q}$ be a quadric solution for the point set $(\vec{x}_1,...,\vec{x}_m)$ and let $(\vec{A}_k,\vec{n}_k)$ represent the $4 k$ quadric constraints for the $k$ points $\vec{X} = (\vec{x}_{m+1},...,\vec{x}_{m+k})$ in form of (\ref{eq:system}) with the same parameters $\vec{q}$.
As $\vec{x}_i \in \vec{X}$ by definition lies on the same quadric $\vec{q}$, it also satisfies $\vec{A}_k \vec{q} = \vec{n}_k$.
Inserting Eq. \ref{eq:nullspace} into this, we get:
\begin{align}
\label{eq:lambdas}
\vec{A}_k (\vec{p}+\vec{N}_A\bm{\lambda}) &= \vec{n}_k\\
(\vec{A}_k  \vec{N}_A) \bm{\lambda} &= \vec{n}_k - \vec{A}_k \vec{p}
\label{eq:lambdas2}
\end{align}
\end{proof}
Solving Eq. \ref{eq:lambdas2} for $\bm{\lambda}$ requires a multiplication of a $4 k \times 10$ matrix with a $10 \times m$ one and ultimately solving a system of $4 k$ equations in $m$ unknowns. Once $\vec{N}_A$ and $\vec{p}$ are precomputed, it is much more efficient to evaluate Eq.~\ref{eq:lemma1} for $k < m$ rather than re-solving the system (\ref{eq:systemApprox}). This resembles updating the solution online for a stream of points. For our case, the amount of streamed points will depend on the size of the basis, as explained below. 
\subsection{Local voting for quadric detection}
\label{sec:voting}
Given a fixed basis composed of $b$ points $(b>0)$ as in Fig. \ref{fig:basis}, a parametric solution can be described. The actual solution can then be found quickly by using Prop.~\ref{thm:fastLambda} by incorporating new points lying on the same quadric as the basis. Thus, the problem of quadric detection is de-coupled into
1) finding a proper basis and
2) searching for compatible scene points. In this section, we assume the basis is correctly found and explain the search by voting. For a fixed basis $\vec{b}_i$ on a quadric, we form the null-space decomposition of the under-determined system $\vec{A}_i\vec{q}=\vec{n}_i$. We then sample further points from the scene and compute the required coefficients $\bm{\lambda}$. Thanks to Prop.~\ref{thm:fastLambda}, this can be done efficiently. Sample points lying on the same quadric as the basis (inliers) generate the same $\bm{\lambda}$ whereas outliers will produce different values. Therefore we propose to construct a voting space on $\bm{\lambda}$ attached to basis $\vec{b}_i$ and cast votes to maximize the consensus, only up to the locality of the basis. Fig.~\ref{fig:basis} illustrates this configuration. The size of the voting space is a design choice and depends on the size of the basis $\vec{b}_i$ vs. the DoFs desired to be recovered (see Fig. \ref{tab:votingspace}). 

While many choices for the basis cardinality are possible (and the formulation in \S~\ref{sec:parametrize} allows for all), we find from \rev{Fig.~\ref{tab:dualspace} that using a three-point basis is advantageous for a generic quadric fit - having three dual points, reduces the minimum number of required primal (incidence) constraints to only four. 
And by the rank analysis given in Eq.~\ref{eq:rank}, we see that it is possible to trade one point off to 1D local search as opposed to two-point vs 3D search for the five-point case.
}
\insertimageC{1}{basis_figure_cropped.pdf}{Once a basis is randomly hypothesized, we look for the points on the same surface by casting votes on the null-space. The sought pilates ball (likely quadric) is marked on the image and below that lies the corresponding filled accumulator by KDE~\cite{rosenblatt1956remarks}.}{fig:basis}{t!}
\begin{figure*}[t!]
\centering
\includegraphics[width=\linewidth]{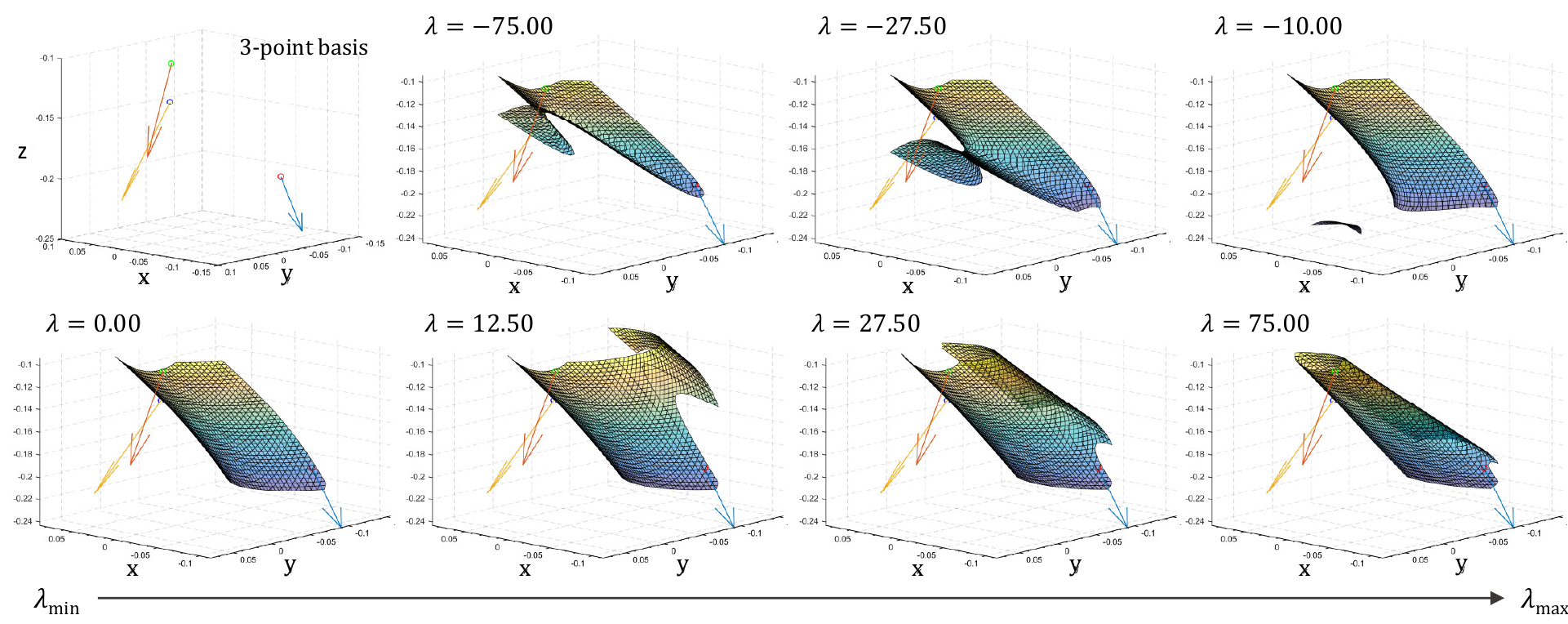}
\label{fig:lambdas}
   \caption{Effect of $\lambda$ on the surface geometry. We compute null-space decomposition for a fixed basis and vary $\lambda$ from -75 to 75 to generate different solutions $\vec{q}$ along the line in the solution space. The plot presents the transition of the surface controlled by $\lambda$.\vspace{-7pt}}
\end{figure*}
\subsection{Efficient computation of voting parameters for a 1D voting space}
Adding a fourth sample point $\vec{x}_4$ completes $rk(\vec{A}) = 10$ and a unique solution can be computed, as described above.
Yet, as we will select multiple $\vec{x}_4$ candidates per basis, hypothesized in a RANSAC loop, an efficient scheme is required, i.e. once again, it is undesirable to re-solve the system in Eq. \ref{eq:systemApprox} for each incoming $\vec{x}_4$ tied to the basis.
It turns out that the solution can be obtained directly from Eq. \ref{eq:nullspace}:

\begin{proposition}
\label{thm:lambda_one}
If the null-space is one dimensional (with only 1 unknown) it holds $\bm{\lambda}\vec{N}_A = \lambda_1 \bm{\mu}_1$ and the computation in Prop.~\ref{thm:fastLambda} reduces to the explicit form:
\begin{align}
\label{eq:lambdas_fast}
\lambda_1 = \frac{\vec{A}_1\vec{N}_A}{\|\vec{A}_1\vec{N}_A\|^2} \cdot (\vec{n}_1 - \vec{A}_1\vec{p})
\end{align}
\end{proposition}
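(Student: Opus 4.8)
The plan is to specialize Proposition~\ref{thm:fastLambda} to the doubly-reduced setting $d_n = 1$ (one-dimensional null-space) and $k = 1$ (a single new point $\vec{x}_4$). First I would dispatch the first claim of the statement: when $\dim \vec{N}_A = 1$, the stacked null-space basis $\vec{N}_A$ consists of the single column $\bm{\mu}_1$, so the coefficient vector $\bm{\lambda}$ collapses to a scalar $\lambda_1$ and the homogeneous part of the parametrization~\eqref{eq:nullspace} becomes $\vec{N}_A \bm{\lambda} = \lambda_1 \bm{\mu}_1$. This is immediate from the definition of $\vec{N}_A$ in Eq.~\eqref{eq:nullspace}, so no real work is needed here.

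Next I would invoke Proposition~\ref{thm:fastLambda} with $k=1$. Its governing equation $(\vec{A}_k \vec{N}_A)\bm{\lambda} = \vec{n}_k - \vec{A}_k \vec{p}$ then specializes to $(\vec{A}_1 \vec{N}_A)\lambda_1 = \vec{n}_1 - \vec{A}_1 \vec{p}$. Here $\vec{A}_1$ is the $4\times 10$ block of constraints (one incidence plus three gradient–normal rows) attached to $\vec{x}_4$ in the form of~\eqref{eq:systemApprox}, so writing $\vec{a} := \vec{A}_1 \vec{N}_A$ and $\vec{c} := \vec{n}_1 - \vec{A}_1 \vec{p}$, both $\vec{a}$ and $\vec{c}$ are $4\times 1$ vectors while $\lambda_1$ is a single scalar. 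Thus the problem is the overdetermined scalar system $\vec{a}\,\lambda_1 = \vec{c}$: four equations in one unknown.

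I would then solve this by its normal equation. Left-multiplying by $\vec{a}^T$ gives $(\vec{a}^T\vec{a})\lambda_1 = \vec{a}^T\vec{c}$, hence
\begin{equation}
\lambda_1 = \frac{\vec{a}^T\vec{c}}{\|\vec{a}\|^2} = \frac{(\vec{A}_1\vec{N}_A)^T(\vec{n}_1 - \vec{A}_1\vec{p})}{\|\vec{A}_1\vec{N}_A\|^2},\nonumber
\end{equation}
which is exactly the claimed closed form (the dot-product notation in the statement), equivalently the Moore--Penrose pseudoinverse of the column $\vec{a}$ applied to $\vec{c}$.

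The argument is essentially a routine specialization, so the only point requiring care, and the nearest thing to an obstacle, is the interpretation of the overdetermined system. In the exact, noise-free regime the four scalar equations are mutually consistent and share a common solution $\lambda_1$, and the normal-equation formula simply recovers that common value; under noise the identical formula instead returns the least-squares estimate, which is the practically relevant quantity for the voting scheme. I would also note the well-posedness condition $\|\vec{A}_1\vec{N}_A\| \neq 0$, i.e. $\vec{a}\neq \vec{0}$, which holds precisely when the new point contributes a rank-raising constraint with a nonzero component along the null-space direction $\bm{\mu}_1$, guaranteeing that the denominator does not vanish.
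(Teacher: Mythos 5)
Your proposal is correct and follows essentially the same route as the paper: both specialize Prop.~\ref{thm:fastLambda} to $k=1$ with a one-dimensional null-space, reducing it to the overdetermined scalar system $(\vec{A}_1\bm{\mu}_1)\lambda_1 = \vec{n}_1 - \vec{A}_1\vec{p}$, and solve it by the Moore--Penrose pseudoinverse of the column $\vec{A}_1\bm{\mu}_1$ (your normal-equation step is just the standard derivation of that pseudoinverse, which the paper quotes directly as $\bm{v}^{+}=\bm{v}/(\bm{v}^T\bm{v})$). Your added remarks on the noise-free consistency interpretation and the non-vanishing of $\|\vec{A}_1\vec{N}_A\|$ are sensible refinements the paper leaves implicit, but they do not change the argument.
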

\begin{proof}
Let us re-write Eq. \ref{eq:lambdas2} in terms of the null space vectors: $\lambda_1(\vec{A}_1\bm{\mu}_1) = \vec{n}_1 - \vec{A}_1 \vec{p}$.
A solution $\lambda_1$ can be obtained via Moore-Penrose pseudoinverse as $\lambda_1 = (\vec{A}_1\bm{\mu}_1)^{+}(\vec{n}_1 - \vec{A}_1 \vec{p})$.
Because for one-dimensional null spaces, $\vec{A}_1\bm{\mu}_1$ is a vector ($\bm{v}$), for which the ${}^{+}$ operator is defined as: $\bm{v}^{+}=\bm{v} / (\bm{v}^T\bm{v})$. Substituting this in Eq. \ref{eq:lemma1} gives Eq. \ref{eq:lambdas_fast}.
\end{proof}
Prop.~\ref{thm:lambda_one} enables a very quick computation of the parameter hypothesis in the case of an additional single oriented point. A MATLAB implementation takes ca. $30 \mu s$ per $\lambda$. \rev{Note that for the minimal system we propose, four incidence (primal) and three tangent plane alignment (dual) constraints are sufficient. This means that the normal of the fourth sample point does not contribute to the set of constraints for a \textit{minimal} fit. Hence, we use this piece of information for the verification of the fit. We only accept to vote a candidate quadric if the gradient of the fitted surface agrees with the surface normal of the fourth point:
\begin{equation}
\frac{\nabla \vec{Q}(\vec{x}_4)}{\| \nabla \vec{Q}(\vec{x}_4) \|} \cdot \vec{n}(\vec{x}_4) > \tau_n.
\end{equation}
We typically set $\tau_n\approx 0.85$ in order to tolerate certain noise.}
\subsection{Quantizing \texorpdfstring{$\bm{\lambda}$}{asdf} for voting}
Unfortunately, $\bm{\lambda}$ is not quantization-friendly, as it is unbounded and has a non-linear effect on the quadric shape (Fig. \ref{fig:lambdas}).
Thus, we seek to find a geometrically meaningful transformation to a bounded and well behaving space so that quantization would lead to little bias and artifacts. From a geometric perspective, each column of $\vec{N}_A$ in Eq. \ref{eq:nullspace} is multiplied by the same coefficient $\vec{\lambda}$, corresponding to the slope of a high dimensional line in the solution space. Thus, it could as well be viewed as a rotation. For 1D null-space, we set: 
\begin{equation}
\theta = \text{atan2}\Big(\,\frac{y_2-y_1}{x_2-x_1}\,\Big)
\end{equation}
where $[x_1,y_1,\cdots]^T=\vec{p}$ and $[x_2,y_2 \cdots ]^T$ is obtained by moving in the direction $\vec{N}_A$ from the particular solution $\vec{p}$ by an offset $\lambda$.\footnote{Simple $tan^{-1}(\lambda)$ could work but would be more limited in the range.} This new angle $\theta$ is bounded and thus easy to vote for. As the null-space dimension grows, $\vec{\lambda}$ starts to represent hyperplanes, still preserving the geometric meaning, i.e. for $d>1$, different $\bm{\theta}=\{\theta_i\}$ can be found. 

\rev{Even though $\theta$ behaves better than $\lambda$ for voting, we still can not guarantee a unimodal distribution such that a single peak can be identified unambiguously. Nevertheless, thanks to the local voting, the case that one distribution is noisy or \textit{misty} will be handled when other random bases are selected. It is more likely that the peaks coming from different bases are concentrated around the same mode, rather than a single peak of one accumulator. Besides, we have empirically observed that in many real cases, even when the distribution is amodal, a single peak is prominent when the sampled fourth is in a reasonable vicinity of the basis. 
}





\setcounter{algorithm}{1}
\begin{algorithm}[t!]
	\small
	\caption{Combined 3-point RANSAC \& local voting for Quadric discovery.}
	\label{alg:ransac}
	\begin{algorithmic}
		\Require{Unit normalized point set $\vec{P}$, Corresponding surface normals $\vec{N}$, A weight coefficient $\omega$, Minimum vote threshold $s_{min}$}
		\Ensure{Quadrics $\vec{Q}=\{\vec{q}_i\}$}
		\State $(\vec{S},\vec{N}) \gets$ Sample scene $(\vec{P},\vec{N})$.
		\While{ $\text{!satisfied }$}\Comment{seek the best global candidates}
			\State $\vec{b}_i \gets \text{Pick a random 3 point-basis from }(\vec{S},\vec{N})$.
			\State $(\vec{A},\vec{n}) \gets \text{Form system in eq. (4) using } \vec{b}_i$
			\State $(\vec{p},\bm{\mu})\gets \text{Perform null space decomposition - eq.7.}$
			\State $\vec{V} \gets \text{Initial voting space of length $\#$ bins}$
			\State $\bm{\Lambda} \gets \{\}$
			\For{$\text{all }\vec{p}_i \text{ in }\vec{P}$}\Comment{local voting}
			\State Compute $\lambda_i$ by including $\vec{p}_i$ using eq. 11.
			\If {\big($1-\frac{\nabla\vec{Q}(\vec{p}_i)}{\|\nabla\vec{Q}(\vec{p}_i)\|} \cdot \vec{n}_i <\tau$\big)} \Comment goodness of fit
			\State Quantize: $\theta \gets tan^{-1}({\lambda_i})$ (using $\text{atan2}$).
			\State $\vec{V}[{\theta}]++$ \Comment{accumulate}
			\State $\bm{\Lambda}[\theta] \gets  \bm{\Lambda}[\theta] \cup \lambda_i $
			\EndIf
			\EndFor
		\State $\theta^* \gets \argmax_{j}\vec{V}_j$\Comment{best candidate in quantized space}
		\If {${\big|\bm{\Lambda}[\theta^*]\big|}>s_{min}$}
		\State $\lambda_{\text{best}} \gets {\sum_k \bm{\Lambda}[\theta^*][k]}\, / \, {\big|\,\bm{\Lambda}[\theta^*]\,\big|}$\Comment{best local coefficient}
		\State $\vec{q} \gets \vec{p}+\lambda_{\text{best}}\bm{\mu}$\Comment{best local solution}
		\State $\vec{Q} \gets \{\vec{Q}, \vec{q}\}$
		\EndIf
		\EndWhile
		\State $\vec{Q} \gets$ mean of the clusters in $\vec{Q}$ using distance $\vec{d}_{\text{close}}$
		\State $\vec{Q} \gets$ mean of the clusters in $\vec{Q}$ using distance $\vec{d}_{\text{far}}$
		\State $\vec{Q} \gets $ sort(score($\vec{Q}$))
		\caption{Combined RANSAC \& Local Voting.}
	\end{algorithmic}
\end{algorithm}

\subsection{Hypotheses aggregation} Up until now, we have described how to find plausible quadrics given local triplet bases. As mentioned, to discover the basis lying on the surface, we employ RANSAC~\cite{Fischler1981}, where each triplet might generate a hypothesis to be verified. Many of those will be similar as well as dissimilar. 
Thus, the final stage of the algorithm aggregates the potential detections to reduce the number of candidate surfaces and to increase the per quadric confidence.
Not to sacrifice further speed, we run an agglomerative clustering similar to~\cite{birdal2015point} in a coarse to fine manner: 
First a fine (\textit{close}) but fast distance measure helps to cluster the obvious hypotheses. 
Second, a coarse (\textit{far}) one is executed on these cluster centers.

\begin{definition}{Our distance computation is two-fold: Whenever two quadrics are close, we approximate their distances as in Eq. \ref{eq:qdistance} ($d_{close}$),
where $\vec{I} \in \mathbb{R}^{4 \times 4}$ is the identity matrix and $\mathbbm{1}: \mathbb{R} \rightarrow \left\{0, 1\right\}$ the indicator function.
We use the pseudoinverse just to handle singular configurations.
If the shapes are far, such manifold-distance becomes erroneous and we use a globally consistent metric.
To do so, we define a more geometric-meaningful distance using the points on the scene itself ($d_{far}$):
\begin{align}
\nonumber
&d_{close}(\vec{Q}_1, \vec{Q}_2) 
:= \mathbbm{1}( \|\vec{q}_1-\vec{q}_2 \|_1 < \tau)
\cdot \| \vec{Q}_1 \vec{Q}_2^{+} - \vec{I} \|_F \\
\label{eq:qdistance}
&d_{far}(\vec{Q}_1, \vec{Q}_2):= \\
&1-\frac{1}{K}\sum\limits_{i=1}^{K}
\mathbbm{1}\left(\lvert\vec{x}_i^T\vec{Q}_1\vec{x}_i\rvert<\tau\right) \cdot
\mathbbm{1}\left(\lvert\vec{x}_i^T\vec{Q}_2\vec{x}_i\rvert<\tau\right) \cdot \nonumber\\
& \mathbbm{1}\left(1-\vec{n}_i \cdot \nabla\vec{Q}_1(\vec{x}_i) <\tau_n\right) \cdot
\mathbbm{1}\left(1-\vec{n}_i \cdot \nabla\vec{Q}_2(\vec{x}_i) <\tau_n\right). \nonumber
\end{align}
$\{\vec{x}_i\}$ denote the $K$ scene samples.}
\end{definition}
Note that, algebraic but efficient $d_{close}$ lacks geometric meaning, while slower $d_{far}$ can, to a certain extent, explain the geometry.
Finally, the quadrics are sorted w.r.t. their scores, evaluated pseudo-geometrically by point and normal-gradient compatibility according to Def.~\ref{dec:score}:
\insertimageStar{1}{synthetic_tests_results6.pdf}{Synthetic evaluations. The plot depicts mean geometric errors on points \textbf{(a)} and mean angular errors \textbf{(b)} for different quadric fitting methods.
The per point error is measured as the average point-to-mesh distance from every ground truth vertex to the fitted quadric.
The angular error (dashed) is computed as the negated dot product between quadric gradient and the ground truth normal. Moreover, \textbf{(c)} shows the average error of the gradient norm compared to the ground truth and
\textbf{(d)} gives speed and detection rate on synthetic data.\vspace{-7pt}}{fig:synth_exp}{t!}
\begin{definition}\label{dec:score}
The score of a quadric is defined to be:
\begin{align}
S_{\vec{Q}, \vec{X}} = 
\frac{1}{K}\sum\limits_{i=1}^K
\mathbbm{1}\left(\lvert\vec{x}_i^T\vec{Q}\vec{x}_i\rvert<\tau\right)
\mathbbm{1}\left(1-\vec{n}_i \cdot \nabla\vec{Q}(\vec{x}_i)<\tau_n\right) \nonumber
\end{align}
\end{definition}
\rev{While other distance metrics, such as spectral decompositions are possible, we found these to be sufficient in our experiments. The final algorithm is summarized in Alg. 3.}


\begin{figure}[t!]
\begin{center}
 \subfigbottomskip =-1cm
\includegraphics[width=0.48\columnwidth]{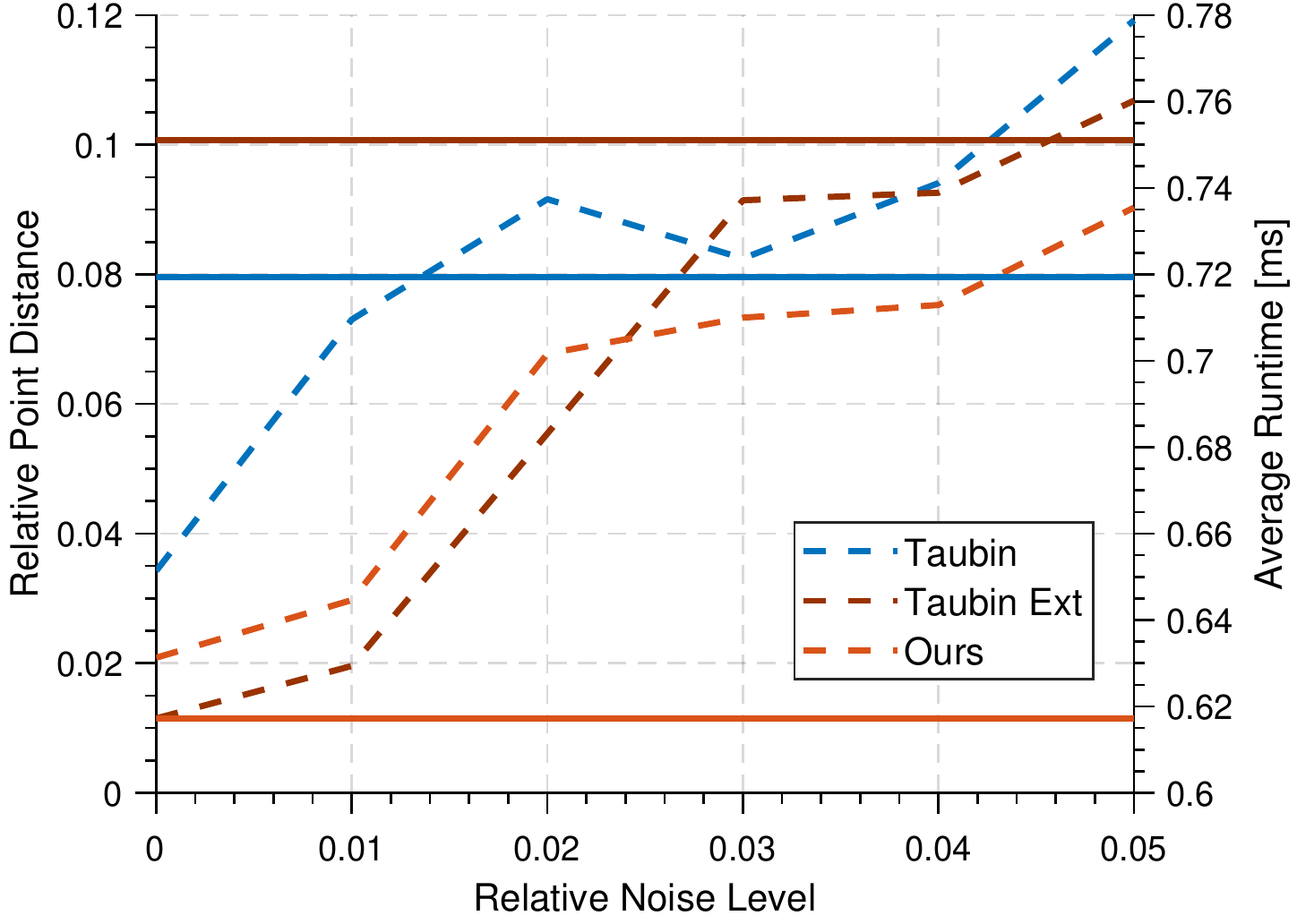}
\hfill
\includegraphics[width=0.22\textwidth]{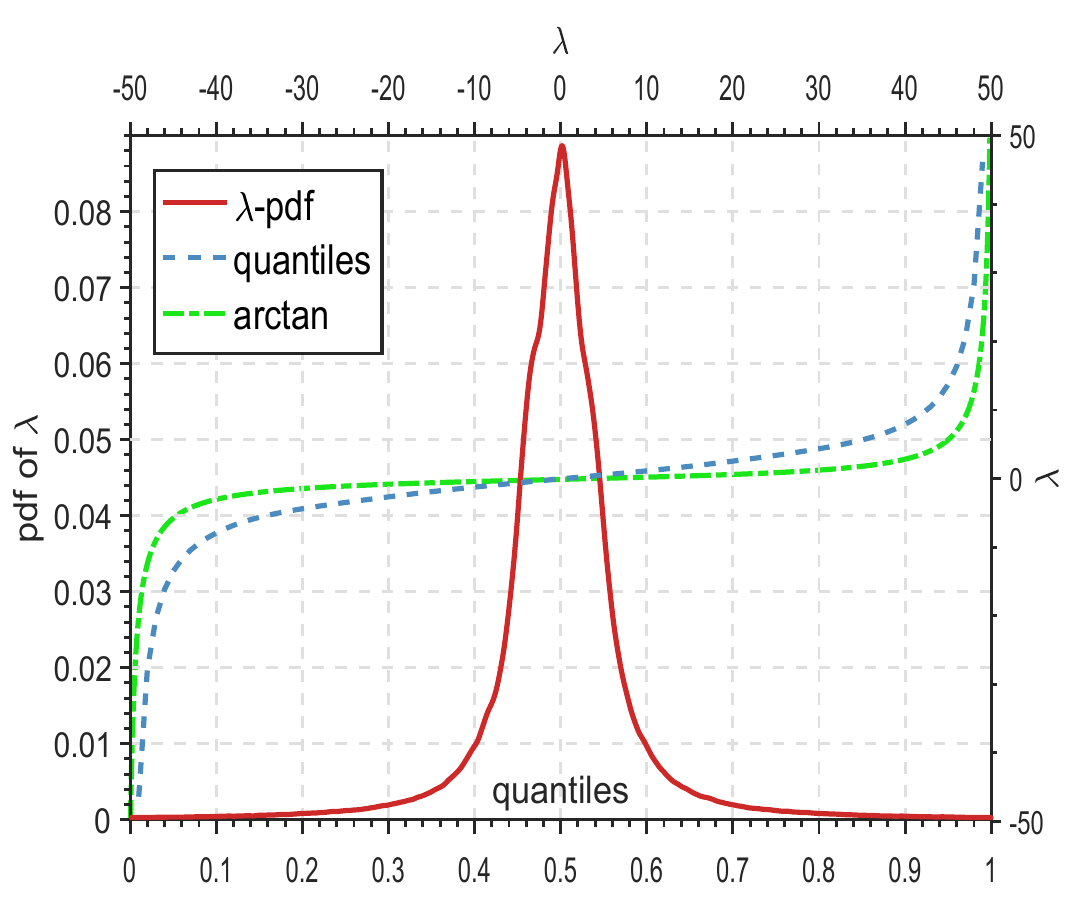}
\end{center}
   \caption{\textbf{a} (left). Effect of extended point neighborhood to the fitting.  \textbf{b} (right). Statistical distribution of the solution-space coefficient and our quantization function: PDF (red curve) and inverse CDF (dashed blue-curve) of $\lambda$ over collected data, and $tan^{-1}$ function (green-line). Note that our quantization function is capable of explaining the empirical data.}
\label{fig:lambda_plot_pdf}
\end{figure}
\section{Experimental Evaluation and Discussions}
\label{sec:experiments}
\subsection{Implementation details}
Prior to operation, we normalize the point coordinates to lie in a unit ball to increase the numerical stability~\cite{hartley1997defense}. Next, we downsample the scene using a spatial voxel-grid enforcing a minimum distance of $\tau_s \cdot diam(\vec{X})$ between the samples ($\tau_s=0.03$)~\cite{birdal2017iros}. The required surface normals are computed by the local plane fitting~\cite{Hoppe1992}. As planes are singular quadrics and occupy large spaces of 3D scenes, we remove them. To do so, we convert our algorithm to a type specific plane detector, which happens to be a similar algorithm to~\cite{drost2015local}. 
Next, influenced by the smoothness of quadrics, we use Difference of Normals (DoN)~\cite{Ioannou2012} to prune the points not located on smooth regions. 
What follows is an iterative selection of triplets to conduct the three-point RANSAC:
We first randomly draw the initial point of the basis $\vec{x}_1$. Once $\vec{x}_1$ is fixed, we query the points in a large enough vicinity, whose normals differ enough to form the three-point basis $\vec{b}$. 
The rest of the points are then randomly selected respecting these criteria.
To avoid degenerate configurations, we skip the basis if it does not result in a rank-9 matrix $\vec{A}$. In addition, to reducing the bias towards repeating bases, we hash the seen triplets and avoid duplicates. 

\subsection{Synthetic tests of fitting and ablation studies}
To asses the accuracy of the proposed fitting, we generate a synthetic test set of multitudes of random quadrics and compare our method with the fitting procedures of Taubin~\cite{Taubin1991}, Tasdizen~\cite{Tasdizen2001}, Andrews~\cite{Andrews2014}, and Beale~\cite{beale2016fitting}. We propose two variants: \textbf{Ours full} will refer to Alg. 1, whereas \textbf{Ours} is the regularized one (Alg. 2).

\subsubsection{Quantitative assessments}
\label{sec:quantitative}
Prior to run, we add Gaussian noise to the ground-truth vertices with $\sigma=[0\% - 5\%]$ relative to the size $s$ of the quadric.
At each noise level, ten random quadrics are tested. We perform not single but twenty fits per set. For the constrained fitting method~\cite{Andrews2014} we pre-specified the type, which might not be possible in a real application. 
We then record and report the average point-to-mesh distance and the angle deviation as well as the runtime performances in Fig. \ref{fig:synth_exp}.
Although, our fit is designed to use a minimal number of points, it also proves robust when more points are added and is among the top fitters for the distance and angle errors. In addition, Fig.~\ref{fig:synth_exp}c shows that the errors on the gradient magnitudes obtained by our quadrics. We achieve the least errors, showing that gradient norms align well with the ground truth, favoring the validity of our approximation/regularization. 
Next, looking at the noise assessments, we see that our full method performs the best on low noise levels but quickly destabilizes. This is because the system might be biased to compute correct norms rather than the solution and it has increased parameters.
We believe the reason for our compact fit to work well is the soft constraint where the common scale factor acts as a weighted regularizer towards special quadrics. When this constraint cannot be satisfied, the solution settles for a very acceptable shape.

In a further test, we include the six neighboring points of each of seven query points to perform a standard Taubin-fit. We call this \textit{Taubin-42}. Fig.~\ref{fig:lambda_plot_pdf}a shows that  while the error of our method is on par with \textit{Taubin-42}, we are more robust at higher noise values and more efficient with a runtime advantage of ca. $22\%$.

Since for for a visually appealing fit, the normal alignment is crucial, we next present a qualitative evaluation.
\subsubsection{Qualitative assessments}
We synthesized a random saddle quadric and performed a random point sampling over its surface. Next, we added Gaussian noise on the sample points and computed the normals. To resolve the sign ambiguity, each normal is flipped in the direction of ground truth gradient. We plot the results of the fitting in Fig.~\ref{fig:tests_noise}.
Even in presence of little noise only some methods fail to estimate the correct geometry, mostly due to the bias towards certain shape~\cite{Andrews2014,beale2016fitting}.
Our approach is able to recover the correct surface even in presence of a severe noise. Also the effect of our regularization is visible on the last column, which possesses the best visual quality.
\insertimageStar{1}{tests_noise_cropped.pdf}{Synthetic tests at various noise levels for different fitting methods. Gaussian noise is added to the point coordinates as well as the estimated normal. The standard deviation varies from 0.11\% of the visible quadric size to 10\%.}{fig:tests_noise}{t!}
\begin{figure}[t!]
\centering
\subfigtopskip =-1cm
 \subfigbottomskip =-1cm
\includegraphics[width=0.44\columnwidth]{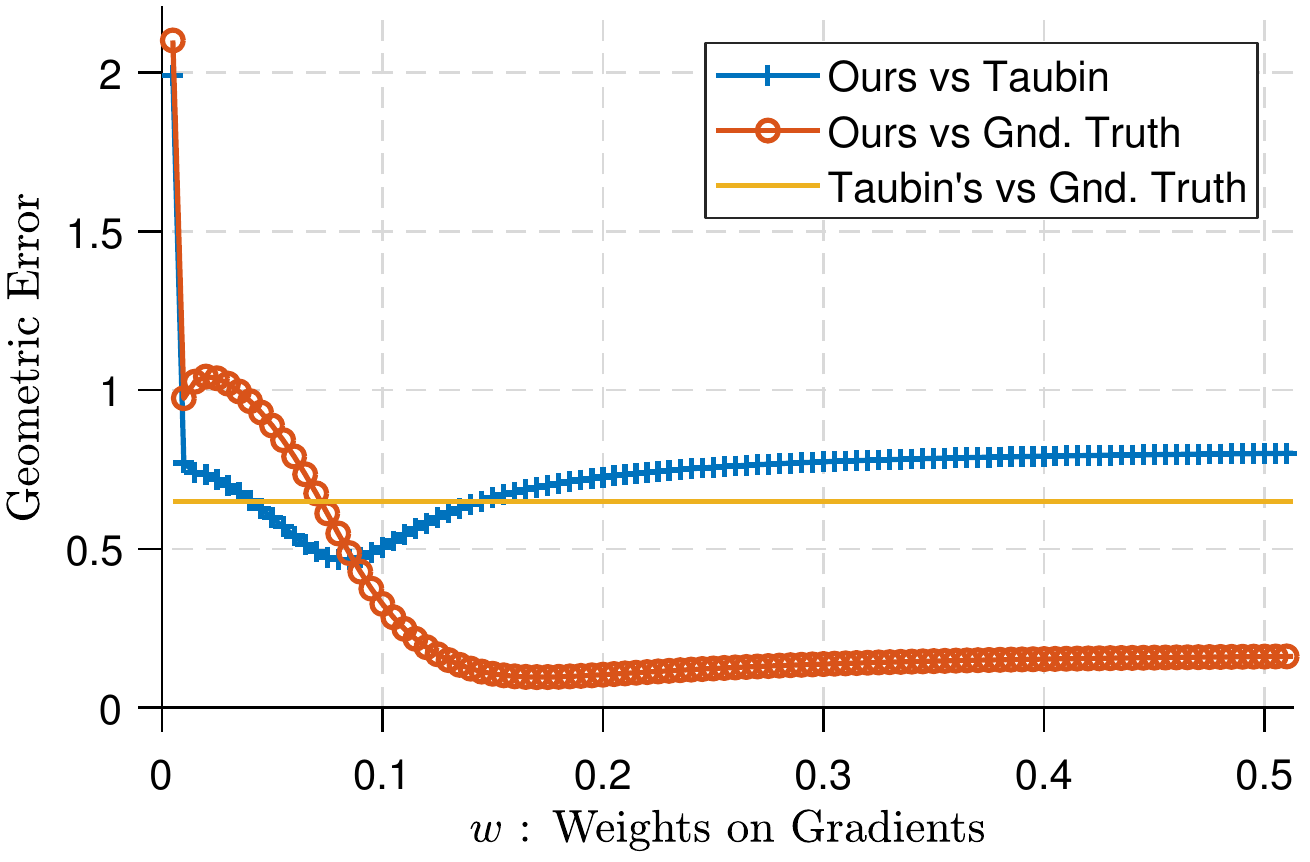}
\hfill
\includegraphics[width=0.265\textwidth]{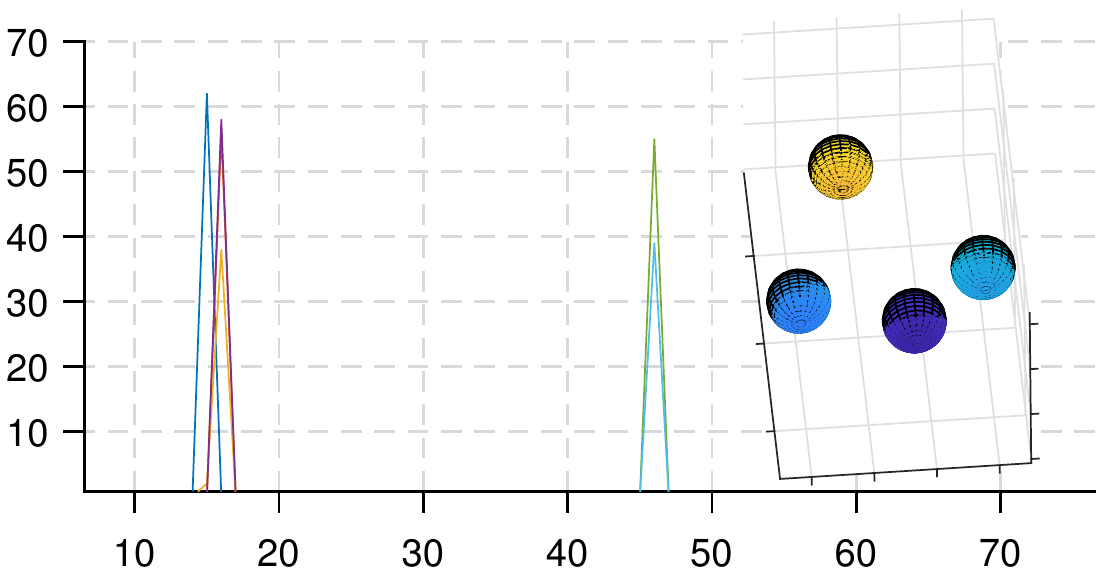}
\caption{\rev{\textbf{a} (left). Effect of the weight $w$ on the quality of the fit (Alg. 2). \textbf{
b} (right). Voting spaces due to different bases selection ($\theta$ vs $\#$ votes).}}
\label{fig:ablation}
\end{figure}

It is of interest to see whether our regularized fit can estimate correct surface normals as well as direction. Thus, a second test was performed to qualitatively observe the gradient properties in more detail. For this, a series of randomly generated quadrics is fitted by Taubin's and our method and the gradients are analyzed both in terms of magnitude and phase, as shown in Fig.~\ref{fig:normal_eval}. 
\insertimageStar{1}{normals_smallest3_cropped.pdf}{Qualitative evaluation of surface normals. Randomly generated quadrics are used as ground truth and fitting is performed. The estimation results of gradient magnitude and angle (phase) is color coded on the surface. For color selection, we use a jet-like temperature map for the gradient magnitude, where blue denotes the lowest and red denotes the highest magnitudes. For the phase, an angular map as show in the color-bar is used. The ideal case is given in ground-truth against which the methods compete.\vspace{-3pt}}{fig:normal_eval}{t!}
Due to our explicit treatment of the gradients, it can be clearly seen that the gradient direction is recovered better. Moreover, the right side of Fig.~\ref{fig:normal_eval} also shows that our approximate approach yields the expected results, while the full method could sometimes generate inconsistent gradient signs, as the scale factors are estimated individually. 
Finally, it is qualitatively visible in Fig.~\ref{fig:normal_eval} that the magnitudes recovered by our method are compatible to the ground truth. Such improvement without sacrificing gradient quality validates the regularizing nature of our approach.


\subsubsection{Is \texorpdfstring{${\atantwo}$}{asdf} a valid transformation for \texorpdfstring{$\bm{\lambda}$}?}
To assess the practical validity of the quantization, we collect a set of 2.5 million oriented point triplets from several scenes and use them as bases to form the underdetermined system $\vec{A}$. We then sample the fourth point from those scenes, compute $\lambda$ and establish the probability distribution $p(\lambda)$ for the whole collection to calculate the quantiles, mapping $\lambda$ to bins via the inverse CDF. A similar procedure has been applied to cross ratios in~\cite{Birdal2016}. We plot the findings together with the $\atantwo$ function in Fig. \ref{fig:lambda_plot_pdf}b and show that the empirical distribution and $\atantwo$ follow similar trends, justifying that our quantizer is well suited to the data.
\rev{\subsubsection{Effect of weighting on the fit}
We now investigate the effect of weighting parameter $w$ on the fit. For a selection of eight noisy points, located on three different synthetic quadrics, we vary $w$ and plot, in Fig.~\ref{fig:ablation}a, the geometric errors attained by Alg. 2, against the ground truth and Taubin fit. While too low of $w$ hurts our fit, there is a large range of values $w\in[0.08,1.0]$, where we can outperform~\cite{Taubin1991}. 
\subsubsection{How do voting spaces look like?}
To provide insights on the local voting spaces of the angles $\theta$, we sample different random bases on four synthetic quadrics as embedded in Fig.~\ref{fig:ablation}b, and collect the votes along with the quantized bins. These accumulators are shown in the same figure, each with a different color. It is observed that, the voting spaces are myst-free and a only single mode emerges, thanks to the maximum distance threshold selected between the basis and the paired point. It is still possible to obtain multiple modes if the threshold is unrealistically picked. The consensus votes correspond to the true shape, and erroneous votes spread randomly. 
}
\begin{figure}[t!]
\centering
\includegraphics[width=\linewidth, clip=true]{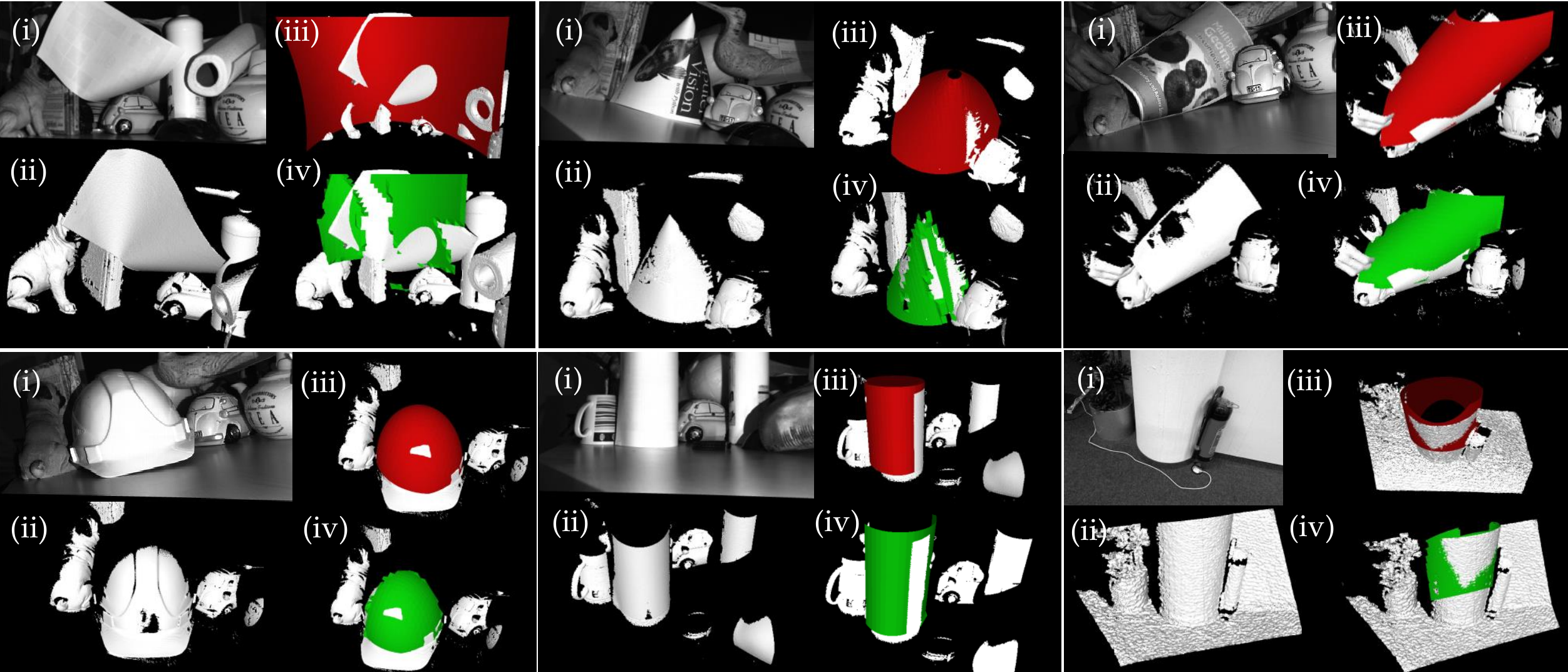}
\caption{\textbf{(i)} Images captured by an industrial structured light sensor and Kinect (last image). \textbf{(ii)} 3D scene. \textbf{(iii)} Detected quadric, shown without clipping. \textbf{(iv)} Quadric in (iii) clipped to the points it lies on.}
\label{fig:qualitative}
\end{figure}
\subsection{Real experiments on quadric detection}
Besides synthetic tests where self evaluation was possible, we assess the quality of generic primitive detection, on 3 real datasets:
\begin{enumerate}
\item{\textbf{Our Dataset}} First, because there are no broadly accepted datasets on quadric detection, we opt to collect our own. To do so, we use an accurate phase-shift stereo structured light scanner and capture 35 3D scenes of 5 different objects within clutter and occlusions.
Our objects are three bending papers, helmet, paper towel and cylindrical spray bottle.
Other objects are included to create clutter.
To obtain the ground truth, for each scene, we generated a visually acceptable set of quadrics using 1) \cite{Schnabel2007} when shapes represent known primitives 2) by segmenting the cloud manually and performing a fit, when the quadric type is not available.
Each scene then contains 1-3 ground truth quadrics. This dataset has low noise, but a high amount of clutter and partial visibility due to the FOV limitations of the sensor.
\item{\textbf{Large Objects}} Kinect sensor is widely accepted in computer vision community. Thus, it is desirable to see the performance of our generic and type-specific fit approaches on the Kinect depth images. To this end, we adapt the large objects RGB-D scan dataset of~\cite{Choi2016}. From this dataset, we sample only the scenes containing objects, that could roughly be explained by geometric primitives. These scenes include apples, globes, footballs, or other small balls. Tab.~\ref{tab:accuracy} summarizes the objects used. Example detections are also shown in Fig.~\ref{fig:spherefit}. We also augment this dataset with a Pilates Ball sequence that we collect. This sequence involves a lot of partial visibility, clutter and fast motions (see appendix).
\item{\textbf{Cylinders}} Finally, we use a subset of the ITODD dataset~\cite{Drost2017}, designed to evaluate object pose estimators. Our subset, \textit{Cylidners}, includes 14 scenes of varying number of cylinders, from one to ten, as shown in Fig.~\ref{fig:cylinders}a. Again, we use RGB images only to ease the visual perception. 
\end{enumerate}

\begin{figure}[t!]
\subfigure[Speed and detection results.]{
\includegraphics[width=0.48815\columnwidth, clip=true]{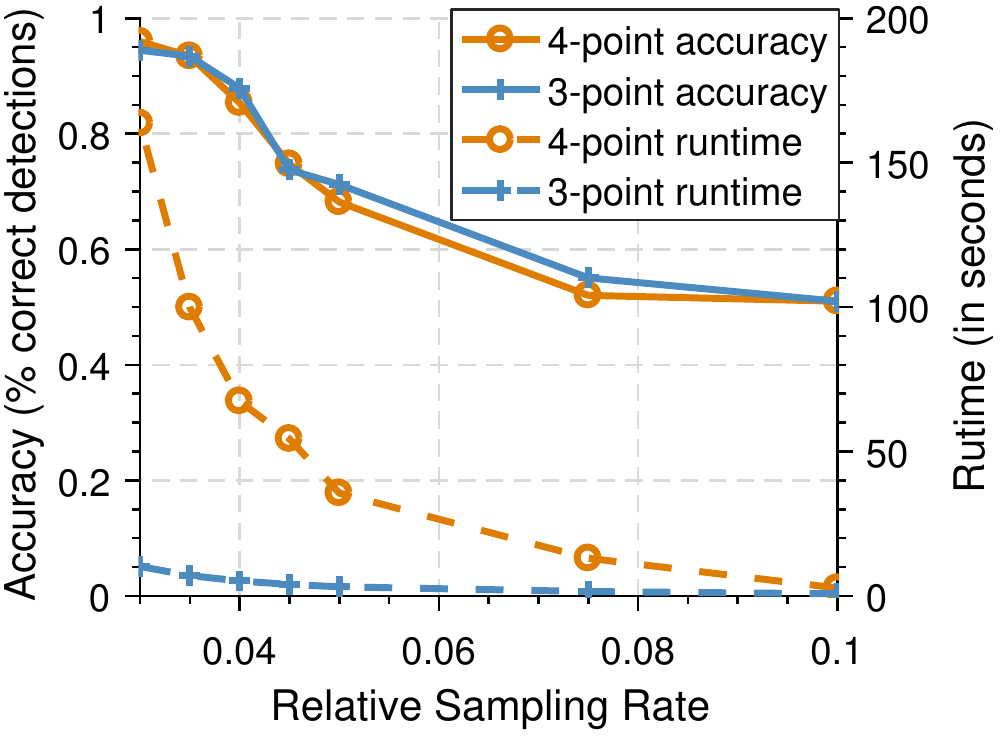}
\label{fig:real}}
\hfill
\subfigure[Distribution of errors of sphere fit.]{
\includegraphics[width=0.46\columnwidth, clip=true]{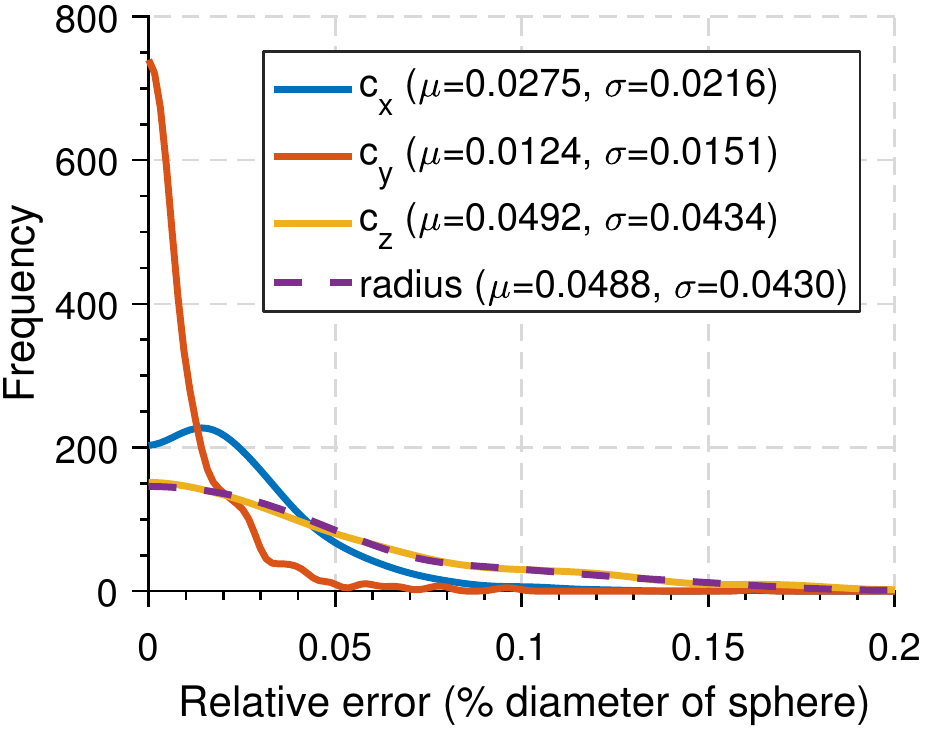}
\label{fig:poseacc}}
\caption{Experiments on real datasets. Our three-point variant clearly outperforms the four-point algorithm in terms of speed, while not sacrificing quality. For three-point method, the mean errors indicate that the localization accuracy is on the levels of the sampling rate $\tau\approx0.05$, agreeing to the success of our algorithm.}
\end{figure}
\subsubsection{Evaluations on detection accuracy} 
To assess the detection accuracy, we manually count the number of detected quadrics aligning with the ground truth in \textit{Our Dataset}.
We compared the four-point and three-point algorithms, both of which we propose.
We also tried the naive nine-point RANSAC algorithm (with \cite{Taubin1991}), but found it to be infeasible when the initial hypotheses of the inlier set is not available. Fig. \ref{fig:qualitative} visualizes the detected quadrics both on our dataset and on the 3D data captured by Kinect .
Fig. \ref{fig:real} presents our accuracy over different sampling rates and the runtime performance.
Our three-point method is on par with the four-point variant in terms of detection accuracy, while being significantly faster.
Next, we also evaluate our detector on the large objects dataset of \cite{Choi2016} without further tuning. Tab.~\ref{tab:accuracy} shows $100\%$ accuracy in locating a frontally appearing ellipsoidal rugby ball over a $1337$ frame sequence without type prior. While such scenes are not particularly difficult, it is noteworthy that we manage to generate the similar quadric repeatedly at each frame within $5\%$ of the quadric diameter.
\insertimageStar{1}{sphereDetection_cropped.pdf}{Qualitative visualizations of sphere detection in the wild: Our algorithm successfully detects primitives in difficult scenarios including clutter, occlusions and large distances. Note that the sphere is detected in 3D only using the point clouds of depth images and we draw the apparent contour of the quadric. The RGB pictures are also included in the top row to ease the visual perception.}{fig:spherefit}{t!}
\subsubsection{How fast is it?}
As our speed is influenced by the factors of closed form fitting, RANSAC and local voting, we evaluate the fit and detection separately. Fig.~\ref{fig:synth_exp}d shows the runtime of fitting part. Our method scales linearly due to the solution of an $4N \times 10$ system, but it is the fastest approach when $<300$ points are used. Thus, it is more preferred for a minimal fit. Fig. \ref{fig:real} then presents the order of magnitude speed gain, when our four-point C++ version is replaced by three-points without accuracy loss. Although the final runtime is in the range of 1-2 seconds, our three-point algorithm is still the fastest known method in segmentation free detection.
\subsubsection{How accurate is the fit?}
To evaluate the pose accuracy on real objects, we use closed geometric objects of known size from the aforementioned datasets and report the distribution of the errors, and their statistics. We choose \textit{football} and \textit{pilates ball 1} as it is easy to know their geometric properties (center and radius). We compare the radius to the true value while the center is compared to the one estimated from a non-linear refinement of the sphere. Our results are depicted in Fig. \ref{fig:poseacc}. Note that the errors successfully remain about the used sampling rates ($\tau\approx0.05$), which is as best as we could get.
\begin{table}[t!]
  \centering
  \setlength\tabcolsep{2 pt}
  \resizebox{\columnwidth}{!}{
    \begin{tabular}{lccccc}
    \toprule
          & Dataset & \# Objects & \multicolumn{1}{c}{Type} & Occlusion & \multicolumn{1}{c}{Accuracy} \\
    \midrule
    Pilates Ball 1 & Ours  & 580   & \multicolumn{1}{c}{Generic} & Yes   & \multicolumn{1}{c}{94.40\%} \\
    Rugby Ball & \cite{Choi2016}    & 1337  & \multicolumn{1}{c}{Generic} & No    & \multicolumn{1}{c}{100.00\%} \\
    Pilates Ball 2 & \cite{Choi2016}    & 1412  & \multicolumn{1}{c}{Sphere} & Yes   & \multicolumn{1}{c}{100.00\%} \\
    Big Globe & \cite{Choi2016}    & 2612  & \multicolumn{1}{c}{Sphere} & Yes   & \multicolumn{1}{c}{90.70\%} \\
    Small Globe & \cite{Choi2016}    & 379   & \multicolumn{1}{c}{Sphere} & Yes   & \multicolumn{1}{c}{56.90\%} \\
    Apple & \cite{Choi2016}    & 577   & \multicolumn{1}{c}{Sphere} & Yes   & \multicolumn{1}{c}{99.60\%} \\
    Football & \cite{Choi2016}    & 1145  & \multicolumn{1}{c}{Sphere} & Yes   & \multicolumn{1}{c}{100.00\%} \\
    Orange Ball & \cite{Choi2016}    & 270   & \multicolumn{1}{c}{Sphere} & Yes   & \multicolumn{1}{c}{93.30\%} \\
    \bottomrule
    \end{tabular}%
    }
  \caption{Detection accuracy on real datasets.}
  \label{tab:accuracy}%
\end{table}%
\subsubsection{Type-specific detection} 
\label{sec:exptype}
It is remarkably easy to convert our algorithm to a type specific one by re-designing matrix $\vec{A}$. Here, we propose a sphere-specific detector. Let us write any sphere in the following matrix form:
\begin{equation}
\label{eq:sphere}
\mathbf{Q}_{s} = \begin{bmatrix}
\vec{I}_3 & -\vec{c}\\
-\vec{c}^T & \lVert \vec{c} \rVert-r^2
\end{bmatrix}
\end{equation}
where $\vec{c} = ( c_x, c_y, c_z )$ and $r$ are the geometric parameters (center and radius) of the sphere. Rotation does not affect spheres and our $\vec{A}\vec{q}=\vec{b}$ formulation in \S~\ref{sec:regularfit} then simplifies to:
\begin{gather}
\vec{A}=
\begin{bmatrix}
\| \vec{x}_1 \|^2 & 2x_1 & 2y_1 & 2z_1 & 1\\
\| \vec{x}_2 \|^2 & 2x_2 & 2y_2 & 2z_2 & 1\\
&& \vdots &&\\
2\vec{x}_1 & & \vec{I}_3 & & \vec{0}_3 \\
2\vec{x}_2 & & \vec{I}_3 & & \vec{0}_3 \\
&& \vdots &&
\end{bmatrix}, 
\vec{b}=\begin{bmatrix}
0 \\ 0 \\ \vdots \\ \vec{n}_1 \\ \vec{n}_2 \\ \vdots
\end{bmatrix} , 
\vec{q}=\begin{bmatrix}
A \\ B \\ C \\ D \\ E
\end{bmatrix}
\end{gather}
Due to the geometric interpretability, at the scoring phases, we can use the point-to-sphere distance as:
\begin{equation}
d_{\text{point $\rightarrow$ sphere}}(\mathbf{p}, \mathbf{q}) = \abs{( \lVert \vec{p}-\vec{c} \rVert_2 - r )}
\end{equation}
where $\vec{p}$ is the point to compute the distance. A sphere to sphere distance (used in clustering) can be obtained by:
\begin{equation}
d_{\text{sphere $\rightarrow$ sphere}}(\mathbf{q}_1, \mathbf{q}_2) = \frac{1}{2} (\abs{r_1-r_2} + \lVert \vec{c}_1-\vec{c}_2 \rVert_2)
\end{equation}
Center and radius of the sphere can always be obtained from the quadric form $\mathbf{Q}_{s}$ as described in eq. \ref{eq:sphere}. 

Note that $rk(\vec{A})=4$ if one point is available, leaving only one free parameter which forms a single dimensional null-space. Geometrically, this means that the radius cannot be resolved from a single point. Yet, by fixing another point, one can vote locally as explained in \S~\ref{sec:voting}. While at this stage Drost and Ilic \cite{drost2015local} prefer to vote for radius explicitly, we vote for the null-space coefficient. The difference is that \cite{drost2015local} involves trigonometric computations before the voting stage, but vote linearly for the geometric parameter, whereas we keep linearity until the voting stage but vote for the non-linear angle $\theta$ corresponding to $\lambda$. Our approach evaluates far less trig functions (only one \textit{atan2}).
\insertimageC{1}{cylinders2_cropped.pdf}{Multiple cylinder detection in clutter and occlusions: Our approach is type agnostic and uninformed about cylinders.}{fig:cylinders}{t!}
We plug this specific fit into our detector without changing other parts and evaluate it on scenes from~\cite{Choi2016} which contains spherical everyday objects. Tab.~\ref{tab:accuracy} summarizes the dataset and reports our accuracy while Fig.~\ref{fig:spherefit} qualitatively shows that our sphere-specific detector can indeed operate in challenging real scenarios. Our algorithm is able to detect a sphere on many difficult cases, as long as the sphere is partially visible. We also do not have to specify the radius as unlike many Hough transform based methods. Note that, due to reduced basis size $(b=|\vec{b}|=1)$ this type specific fit can meet real-time criteria.
\subsubsection{Comparison to model based detectors} 
The literature is overwhelmed by the number of 3d model based pose estimation methods. Hence, we decide to compare our model-free approach to the model based ones. For that, we take the cylinders subset of the recent ITODD dataset~\cite{Drost2017} and run our generic quadric detector without training or specifying the type. Visuals of different methods are presented in Fig. \ref{fig:cylinders} whereas detection performance are reported in Tab.~\ref{tab:cylinders}. Our task is not to explicitly estimate the pose. Thus, we manually accept a hypothesis if ICP~\cite{Fitzgibbon2003} converges to a visually pleasing outcome. Note, multiple models are an important source of confusion for us, as we vote on generic quadrics. However, our algorithm outperforms certain detectors, even when we are solving a more generic problem as our shapes are allowed to deform into geometries other than cylinders. 
\begin{table}[t!]
  \centering
  \setlength\tabcolsep{2 pt}
  \resizebox{\columnwidth}{!}{
    \begin{tabular}{cccccc}
    \toprule
    PPF3D & PPF3D-E & PPF3D-E-2D & S2D~\cite{Ulrich2012}   & RANSAC~\cite{Papazov2010} & Ours \\
    \midrule
72\% & 73\% & 74\% & 24\% & 86\% & 41.9\% \\ 
    \bottomrule
    \end{tabular}%
  }
  \caption{Results on ITODD~\cite{Drost2017} cylinders: Even without looking for a cylinder, we outperfom the model based ~\cite{Ulrich2012}.}
  \label{tab:cylinders}%
\end{table}%
\insertimageC{0.9}{quadric_planes_cropped.pdf}{When planes gather a majority of the votes, our method approximates them by a close non-degenerate quadric surface. This is similar to representing a large planar football field using Earth's surface.}{fig:planes}{h}
\section{Discussion and Conclusions}
We presented a fast and robust pipeline for generic primitive detection in noisy and cluttered 3D scenes. Our first contribution is a novel, linear fitting formulation for oriented point quadruplets. We thoroughly analyzed this fit and devised an efficient null-space voting which uses three pieces of point primitives plus a simple local search instead of a full four oriented point fit. Together, the fitting and voting, establish the minimalist cases known up to now - three oriented points, potentially paving the way towards real-time operation. While our detector targets a generic surface, we can, optionally, convert to a type-specific fit to boost speed and accuracy. 

Unless made specific, our method is surpassed by type-specific fits in detection rate since solving the generic problem is more difficult. It remains an open issue to bring the performance to the levels of type-specific fits.
Nevertheless, if the design matrix $\bm{A}$ targets a specific type, we perform even better. Degenerate cases are also difficult for us as shown in Fig. \ref{fig:planes}, but we always find a non-degenerate configuration good-enough to approximate the primitive.

\vspace{-3mm}
\ifCLASSOPTIONcompsoc
  \section*{Acknowledgments}
\else
  \section*{Acknowledgment}
\fi

The authors would like to thank Bertram Drost and Maximilian Baust for fruitful discussions.

\ifCLASSOPTIONcaptionsoff
  \newpage
\fi

{\small
\bibliographystyle{IEEEtran}
\bibliography{egbib}
}



%

%
\begin{IEEEbiography}[{\includegraphics[width=1.1in,height=1.25in,keepaspectratio]{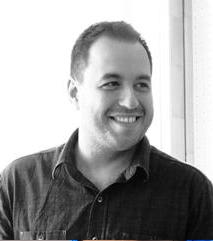}}]{Tolga Birdal} is a PhD candidate at the Chair for Computer Aided Medical Procedures / TU Munich and a Doktorand at Siemens AG. He completed his Bachelors as an Electronics Engineer at Sabanci University in 2008. Subsequently, he studied Computational Science and Engineering at TU Munich. In continuation to his Master's thesis on \"3D Deformable Surface Recovery Using RGBD Cameras\", he now focuses his research and development on large object detection, pose estimation and reconstruction. Recently, he is awarded Ernst von Siemens Scholarship 2015 and received the EMVA Young Professional Award 2016 for his PhD work.
\end{IEEEbiography}



\begin{IEEEbiography}[{\includegraphics[width=1.1in,height=1.35in,keepaspectratio]{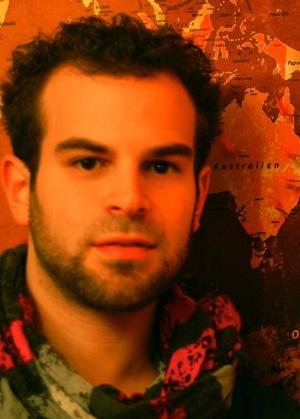}}]{Benjamin Busam} is a PhD candidate with the Chair for Computer Aided Medical Procedures, Technical University of Munich and Head of Research at FRAMOS Imaging Systems. He finished his Bachelors in Mathematics at TUM in 2011. In his subsequent postgraduate programme, he studied Mathematics and Physics at ParisTech, France and at University of Melbourne, Australia, before he graduated with distinction at TU Munich in 2014. In continuation to his Master's thesis on “Projective Geometry and 3D point cloud matching”, for which he received the EMVA Young Professional Award 2015, he now focuses his research and development on 2D/3D object detection and pose estimation for both outside-in and inside-out tracking.
\end{IEEEbiography}

\begin{IEEEbiography}[{\includegraphics[width=1.1in,height=1.3in,keepaspectratio]{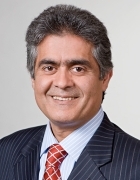}}]{Nassir Navab} is a professor of computer science and founding director of Computer Aided Medical Procedures and Augmented Reality (CAMP) laboratories at Technische Universit{\"a}t M{\"u}nchen (TUM) and Johns Hopkins University (JHU). He received the Ph.D. degree from INRIA and University of Paris XI, France, and enjoyed two years of postdoctoral fellowship at MIT Media Laboratory before joining Siemens Corporate Research (SCR) in 1994. At SCR, he was a distinguished member and received the Siemens Inventor of the Year Award in 2001. In 2012, he was elected as a fellow member of MICCAI society. He received the 10 year lasting Impact Award of IEEE ISMAR in 2015. He holds 45 granted US patents and over 40 European ones and served on the program and organizational committee of over 80 international conferences including CVPR, ICCV, ECCV, MICCAI, and ISMAR. His current research interests include robotic imaging, computer aided surgery, computer vision and augmented reality.
\end{IEEEbiography}

\begin{IEEEbiography}[{\includegraphics[width=1.1in,height=1.25in,keepaspectratio]{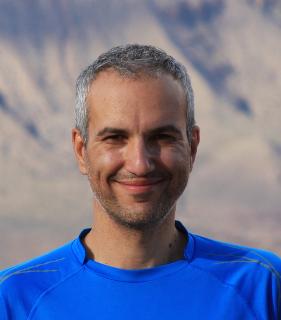}}]{Slobodan Ilic} is currently senior key expert research scientist at Siemens Corporate Technology in Munich, Perlach. He is also a visiting researcher and lecturer at Computer Science Department of TUM and closely works with the CAMP Chair. From 2009 until end of 2013 he was leading the Computer Vision Group of CAMP at TUM, and before that he was a senior researcher at Deutsche Telekom Laboratories in Berlin. In 2005 he obtained his PhD at EPFL in Switzerland under supervision of Pascal Fua. His research interests include: 3D reconstruction, deformable surface modelling and tracking, real-time object detection and tracking, human pose estimation and semantic segmentation.
\end{IEEEbiography}

\begin{IEEEbiography}[{\includegraphics[width=1.1in,height=1.25in,keepaspectratio]{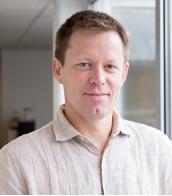}}]{Peter Sturm} obtained MSc degrees from INPG (National Polytechnical Institute of Grenoble, France) and the University of Karlsruhe, both in 1994, and a Ph.D. degree from INPG in 1997, with Long Quan as advisor. His Ph.D. thesis received the SPECIF award. After a two-year postdoc at Reading University, working with Steve Maybank, he joined Inria on a permanent research position as Chargé de Recherche in 1999. Since 2006, he is Directeur de Recherche (the INRIA equivalent of Professor). Since 2015, he is Deputy Scientific Director of Inria. His main research topics have been in Computer Vision, and specifically related to camera (self-)calibration, 3D reconstruction and motion estimation. In 2011, Peter joined the STEEP research team, which is working toward contributing to sustainable development and on the use of integrated land use and transportation models for urban areas, in particular.
\end{IEEEbiography}





\end{document}